\def\eqref#1{equation~\ref{#1}}
\def\1{\bm{1}}
\def\mA{{\bm{A}}}
\DeclareMathAlphabet{\mathsfit}{\encodingdefault}{\sfdefault}{m}{sl}
\SetMathAlphabet{\mathsfit}{bold}{\encodingdefault}{\sfdefault}{bx}{n}
\newcommand{\E}{\mathbb{E}}
\newcommand{\xc}[1]{\textsf{\color{magenta} $\{$ Xinyi : #1$\}$}}
\newcommand{\A}{\mathcal{A}}
\newcommand{\K}{\ensuremath{\mathcal K}}
\newcommand{\ignore}[1]{}
\theoremstyle{plain}
\newtheorem{theorem}{Theorem}
\newtheorem{lemma}[theorem]{Lemma}
\newtheorem{assumption}{Assumption}
\newtheorem*{theorem*}{Theorem}
\newtheorem*{lemma*}{Lemma}
\newtheorem*{corollary*}{Corollary}
\newtheorem*{proposition*}{Proposition}
\newtheorem*{claim*}{Claim}
\newtheorem*{fact*}{Fact}
\newtheorem*{observation*}{Observation}
\newtheorem*{assumption*}{Assumption}
\theoremstyle{definition}
\newtheorem*{definition*}{Definition}
\newtheorem*{remark*}{Remark}
\newtheorem*{example*}{Example}
 \theoremstyle{plain}
\newtheorem*{theoremaux}{\theoremauxref}
\gdef\theoremauxref{1}
\DeclareMathAlphabet{\mathbfsf}{\encodingdefault}{\sfdefault}{bx}{n}
\def\mA{{\mathcal A}}
\newcommand{\reals}{\mathbb{R}}
\renewcommand{\leq}{~\le~}
\let\oldtfrac\tfrac
\renewcommand{\tfrac}[2]{\smash{\oldtfrac{#1}{#2}}}
\let\nablaold\nabla
\renewcommand{\nabla}{\nablaold\mkern-2.5mu}
\renewcommand{\epsilon}{\varepsilon}
\renewcommand{\tilde}{\widetilde}
\renewcommand{\hat}{\widehat}
\title{Adaptive Regret for Bandits Made Possible: \\ Two Queries Suffice}
\author{Zhou Lu\thanks{Equal contribution}\\
Princeton University\\
\And
Qiuyi Zhang\footnotemark[1]\\
Google Deepmind\\
\And
Xinyi Chen\\
Google Deepmind\\
Princeton University\\
\AND
Fred Zhang\\
UC Berkeley\\
\And
David Woodruff\\
Google Research\\
Carnegie Mellon University\\
\And
Elad Hazan\\
Google Deepmind\\
Princeton University
}
\begin{document}

\maketitle

\begin{abstract}
Fast changing states or volatile environments pose a significant challenge to online optimization, which needs to perform rapid adaptation under limited observation. In this paper, we give query and regret optimal bandit algorithms under the strict notion of strongly adaptive regret, which measures the maximum regret over any contiguous interval  $I$. Due to its worst-case nature, there is an almost-linear $\Omega(|I|^{1-\epsilon})$ regret lower bound, when only one query per round is allowed [Daniely el al, ICML 2015]. Surprisingly, with just two queries per round, we give Strongly Adaptive Bandit Learner (StABL) that achieves $\widetilde{O}(\sqrt{n|I|})$ adaptive regret for multi-armed bandits with $n$ arms. The bound is tight and  cannot be improved in general. Our algorithm leverages a  multiplicative update scheme of varying stepsizes and a carefully chosen observation distribution to control the variance. Furthermore, we extend our results and provide optimal algorithms in the bandit convex optimization setting. Finally, we empirically demonstrate the superior performance of our algorithms under volatile environments and for downstream tasks, such as algorithm selection for hyperparameter optimization.
\end{abstract}


\section{Introduction}

In  online optimization, a player iteratively chooses a point from a decision set, and receives loss from an adversarially chosen loss function. The classic metric for measuring the performance of the player is regret: the difference between her total loss and that of the best fixed comparator in hindsight. 

However, as pointed out by Hazan and Seshadhri \cite{hazan2009efficient}, regret incentivizes static behavior and is not the correct metric in changing environments. They instead proposed the notion of \textit{adaptive regret}, defined as  the maximum regret over any continuous interval in time.
%
This notion intuitively captures adaptivity  to the environment, and is studied later by \cite{daniely2015strongly} which gives an algorithm with near-optimal ${O}(\sqrt{|I|} \log T)$ adaptive regret for any interval $I$ simultaneously.

Most algorithms for  minimizing adaptive regret, such as \cite{hazan2009efficient, daniely2015strongly},  work by running  $O(\log T)$  independent copies of online learning schemes and using a meta-learner to aggregate them.
Therefore, they make $O(\log T)$ queries per round. In practice,  oftentimes it is expensive to evaluate the arms. For example, an arm corresponds to a set of hyperparameter  values of a large machine learning model, and playing an arm amounts to evaluating   the model performance and fully training the model.   This motivates the study of  query-efficient adaptive regret algorithms. 
Recently, Lu and Hazan \cite{lu2022efficient} give an improved algorithm with only $O(\log \log T)$  queries per round. 
However, the known lower bound in \cite{daniely2015strongly} only shows that in the bandit setting, near-optimal adaptive regret is impossible with one query. This leaves the following question:
\begin{center}
  \textit{Can we design algorithms with near-optimal adaptive regret using fewer queries?}
\end{center}
In this paper we give an affirmative answer, showing that two queries per round is enough to guarantee an $\widetilde{O}(\sqrt{|I|})$ adaptive regret bound. The key ingredient of our algorithm is using the additional query for exploration under a special distribution on arms, which serves to construct unbiased loss estimators with small variance for both experts and the meta-learner.

\subsection{Our Results}
We design adaptive regret minimization schemes with the power to query additional arms.  
In the adversarial multi-arm bandit (MAB) problem,  
the algorithm selects and plays one arm per round.
In our setting the algorithm can pick additional arms to query \textit{in parallel to} selecting which arm to play, then the loss values of both the queried and played arms will be revealed, but the algorithm only suffers the loss of the arm played. This is similar to, but still different from the multi-point feedback model \cite{agarwal2010optimal}, which suffers the average loss of the arms. The query complexity counts the total arm observations received, including the one played.

First, we prove that two queries suffice for $\widetilde{O}(\sqrt{nI})$ adaptive regret in the MAB setting (we slightly abuse the notation $I$ to denote its length $|I|$). 
Our algorithm runs an EXP3-type meta-algorithm on top of black-box bandit learners $\mA$ which are EXP3. It uses the additional query to perform exploration every round for constructing a bounded unbiased loss estimator. Then each bandit  algorithm together with the meta-algorithm gets updated according to the loss estimator. Together with the lower bound in \cite{daniely2015strongly}, our algorithm gives a tight characterization of the query efficiency of adaptive regret minimization. Such $O(\log T)$ to 2 improvement in the number of queries needed is significant for computational efficiency, for applications such as hyperparameter optimization in expensive settings.

\begin{table}[ht!]
\caption{Adaptive regret bounds and query efficiency in the adversarial multi-armed bandits setting. }
\begin{center}
\begin{tabular}{|c|c|c|}
\hline
 Algorithm &  
Adaptive regret bound &
Number of queries
\\
\hline
FLH \cite{hazan2009efficient}  & $\sqrt{nT }$ & $O(\log T)$ \\
\hline
SAOL \cite{daniely2015strongly}  & $ \sqrt{nI\log T} $ & $O(\log T)$  \\
\hline
EFLH \cite{lu2022efficient} & $I^{\frac{1}{2}+\epsilon} \cdot  \sqrt{n\log T} $ & $O\left(\frac{\log \log T}{\epsilon}\right)$\\
\hline \textbf{This paper} (\cref{thm: bandit}) & $\sqrt{nI\log n} \cdot \log^{1.5} T $ & $2$\\
\hline
\end{tabular}

\label{table: bandit}
\end{center}
\end{table}
Next, we extend our MAB method to the bandit convex optimization (BCO) setting, proving that three queries suffice for $\widetilde{O}(\sqrt{I})$ adaptive regret. We use one of the additional queries to do uniform exploration, which gets us the loss value of some expert $i$'s prediction $\ell_t\left(\mA_i(t)\right)$. This value is used to construct both a bounded unbiased loss estimator to the meta EXP3 algorithm, and an unbiased sparse gradient estimator with bounded variance by leveraging another query to get $\ell_t\left(\mA_i(t)+\mu \right)$, where $\mu$ is a small random perturbation term. 

Finally, we show empirically the advantage of using our adaptive algorithms in changing environments on synthetic and downstream tasks, such as algorithm selection during hyperparameter optimization.

\subsection{Related Work}
\paragraph{Adaptive Regret Minimization}

Motivated by early work on shifting experts \cite{herbster1998tracking,bousquet2002trackin}, the notion of adaptive regret was first proposed by  \cite{hazan2009efficient}. They gave an algorithm called Follow-the-Leading-History (FLH) with $O(\log^2 T )$ adaptive regret for   online   optimization on strongly convex functions.
The  bound on the adaptive
regret for general convex cost functions, however,  is $O(\sqrt{T}\log T)$. 
This question has been further studied by a sequence of work \cite{daniely2015strongly,jun2017improved,lu2022adaptive} which aims to improve the regret bound. In particular, \cite{daniely2015strongly} proposed the algorithm Strongly-Adaptive-Online-Learner (SAOL) achieving a near-optimal  $O(\sqrt{I}\log T)$ bound. Later, \cite{jun2017improved} improved this bound by a $\sqrt{\log T}$ factor and \cite{lu2022adaptive} attained a second-order bound. Crucially, note that all the algorithms use $O(\log T)$ queries per round.

Recently, \cite{lu2022efficient} initiated a study on improving the efficiency of adaptive regret minimization. They showed a trade-off between adaptive regret and computational complexity, that an $\tilde{O}(I^{\frac{1}{2}+\epsilon})$ regret bound is achievable with $O\left(\frac{\log \log T}{\epsilon}\right)$ base algorithms,each with one query per round. This result differs from ours in two aspects: (i) \cite{lu2022efficient} studied the computational complexity (number of base algorithms), while we consider the query complexity and achieve a better bound; and (ii) \cite{lu2022efficient} focused on improving the classic exponential-history-lookback technique of \cite{hazan2009efficient, daniely2015strongly} itself, by showing an regret upper bound with doubly exponential lookback. However, they also showed a matching lower bound that the history-lookback technique must use $\Omega(\log \log T)$ experts to achieve non-vacuous regret. Instead, our work bypasses such limitations by incorporating the EXP3 method into the classic framework, using only a constant number of queries.

For adaptive regret in the BCO setting, \cite{zhao2021bandit} showed that an $\tilde{O}(\sqrt{T})$ adaptive regret bound is achievable under the two-point feedback model. However, the result is weaker than ours in two aspects. First, our result provides a  strongly adaptive regret bound $\tilde{O}(\sqrt{I})$ which depends on the interval-length, and is a stronger notion than adaptive regret. Second, our regret bound is independent of the condition number $\kappa$, while their regret bound is $\tilde{O}(dGD\kappa\sqrt{T})$. We will
discuss the comparison in more details later.

\paragraph{Dynamic Regret}
Dynamic regret is another related notation to handle changing environments in OCO, which aims to capture comparators with bounded total movement instead. Starting from \cite{zinkevich2003online} which provided an $\tilde{O}(\mathcal{P}\sqrt{T})$ dynamic regret bound of vanilla OGD where $\mathcal{P}$ denotes the path length, there have been many works on improving and applying dynamic regret bounds. \cite{zhao2020dynamic, zhang2017improved} achieved improved dependence on $\mathcal{P}$ under further assumptions. \cite{baby2021optimal, baby2022optimal} focused on the setting of exp-concave and strongyly-convex loss, showing that the classic adaptive regret algorithm FLH \cite{hazan2009efficient} guarantees $\tilde{O}(T^{\frac{1}{3}}\mathcal{P}^{\frac{2}{3}})$ dynamic regret. Some works study the relationship between adaptive regret and dynamic regret, for example \cite{zhang2018dynamic, zhang2020minimizing}.

\paragraph{Bandit Convex Optimization}
The study on BCO was initiated by \cite{flaxman2004online}, which showed how to construct an approximate gradient estimator with bandit feedback, providing an $O(T^{3/4})$ regret bound. If multi-point feedback is allowed, the regret bound can be improved to $O(\sqrt{T})$ as shown in \cite{agarwal2010optimal}. Later, \cite{bubeck2017kernel} used a kernel-based algorithm to achieve the optimal $O(\sqrt{T})$ regret bound. Nevertheless, none of the prior works considered adaptive regret in the BCO setting.
\section{Settings and Preliminaries}
We study adaptive regret in a limited information model, for both the multi-armed bandit (MAB) and the continuous bandit convex optimization (BCO) problems. In the MAB problem, a decision maker $\mathcal{A}$ plays a game of $T$ rounds, she pulls an arm $x_t$ at round $t$ and chooses an additional set of arms $X_t$ to query. Then an adversary reveals the values of the loss vector $\ell_t$ on $\{x_t\}\cup X_t$.  The goal is to minimize the (strongly) adaptive regret, which examines   all contiguous intervals $I=[j,s]\in [T]$: (we will omit $I$ when it's clear from the context)
$$
\text{SA-regret($\mathcal{A}, I$)}=\max_{s-j=I}\left[ \sum_{t=j}^s \ell_t^{\top}e_{x_t}-\min_{i} \sum_{t=j}^s \ell_t^{\top}e_{i}\right].
$$
For the BCO setting, similarly the decision maker chooses $x_t \in \K\subset \reals^d$ at each time $t$ along with a set of points $X_t$, where $\K$ is some convex domain, then the adversary reveals the loss values on $\{x_t\}\cup X_t$. The player aims to minimize the (strongly) adaptive regret as well: (notice for expected regret, it contains the above definiton for MAB as a special case when $\K$ is a simplex)
$$
\text{SA-regret($\mathcal{A}, I$)}=\max_{s-j=I}\left[ \sum_{t=j}^s \ell_t(x_t)-\min_{x\in \K} \sum_{t=j}^s \ell_t(x)\right].
$$
The decision maker  has limited power to query $\ell_t$.   For every round $t$ she has a   query budget $|X_t|\le N$ for some small constant $N\in \mathbb{N}^{+}$. In particular, we will only consider the case $N=1$ for the MAB problem and $N=2$ for the BCO problem. We emphasize the choice of which arms to query is made before any information of the loss is revealed, thus differentiates our setting from the stronger "Bandit with Hints" setting \cite{BhaskaraGIKM23}. Our setting is stronger than the vanilla MAB setting, but weaker than the full-information setting or the hint setting.

We make the following assumption on the loss $\ell_t$ and domain $\K$, which is standard in literature. 
\begin{assumption}
    For the MAB problem, the loss vector $\ell_t$ is assumed to have each of its coordinate bounded in $[0,1]$. For the BCO problem, we assume the loss $\ell_t$ is convex, $G$-Lipschitz and non-negative. The convex domain $\K$ is sandwiched by two balls centered at origin: $r\mathbf{B}\subset\K\subset D\mathbf{B}$, we denote the condition number $\kappa=D/r$. 
\end{assumption}

\subsection{The Query Model}
Our query model is similar to the multi-point feedback model \cite{agarwal2010optimal}, with a slight difference. Both models specify a set of arms to query beforehand, then after the adversary reveals the loss, both models receive loss information on the set of chosen arms. Here, the definition (evaluation of the loss function at a given point) and complexity (number of evaluations) of query are the same for both models, and the only difference is that our model only incurs loss of the actually played arm, while the multi-point feedback model incurs the average loss of the set of chosen arms. Notice that neither model strictly implies the other model. As a result, it's fair to compare the query efficiency of our result with previous works under the multi-point feedback model. 

\subsection{The EXP3 Algorithm} 
The EXP3 algorithm for adversarial MAB \cite{auer2002nonstochastic} is based on the multiplicative update method, and performs a weight update according to an unbiased loss estimator using bandit feedback. Denote $w_t(k)$ as the weight of arm $k$ at time $t$, the EXP3 algorithm   samples the arm $x_t$ according to the weights $w_t(k)$. Ideally, we would like to update each weight based on full information of the loss:
$$
w_{t+1}(k)=w_t(k)\left(1-\eta \ell_t^{\top} e_k\right).
$$
In the bandit setting, we have only the value of $\ell_t^{\top} e_{x_t}$.
Let $\mathbf{1}$ denote the all-ones vector.
The EXP3 algorithm instead constructs an unbiased estimator $\hat{\ell}_t$ to replace $\ell_t$:
$$
\hat{\ell}_t(i)=\mathbf{1}_{i=x_t} \frac{\ell_t^{\top} e_{x_t}}{w_t(k)}.
$$
This pseudo loss has a bounded variance. In particular, $\mathbb{E}[w_t^{\top} \hat{\ell}_t^2]\le n$, which gives the EXP3 algorithm an $\tilde{O}(\sqrt{nT})$ regret bound.

\subsection{Standard Framework for Minimizing Adaptive Regret}
We briefly review the standard framework of adaptive regret minimization. Since adaptive regret   asks for small regret over all intervals, a simple idea is to construct, for any interval $I$, a base learner $\mA_I$ that achieves optimal regret $O(\sqrt{I})$ on $I$. There are $O(T^2)$  contiguous intervals between $[1,T]$. Thus, a naive procedure is to run a meta-learner, such as multiplicative weights (MW) update, on these $O(T^2)$ experts, and this leads to  an $O(\sqrt{I \log T})$ adaptive regret.

Unfortunately, although the regret bound is near optimal, the naive algorithm is   inefficient. In particular, for any $t$ there are $\Omega(T)$ number of intervals containing $t$, and the naive algorithm needs to maintain and update $\Omega(T)$ number of base learners per round.  To resolve this issue, the key technique is to consider only a subset $S$ of all intervals \cite{hazan2009efficient, daniely2015strongly}:
$$
S=\left\{[s2^k,(s+1)2^k-1] \mid 0\le k\le \log T,\ s\in \mathbb{N^{+}}, (s+1)2^k-1\le T\right\}.
$$
The set $S$, known as the geometric interval set, contains all successive intervals with length equal to some power of two. In particular, the $kth$ expert will only need to optimize the series of intervals $[2^k,2\times 2^k-1],[2\times 2^k,3\times 2^k-1],\cdots$ one by one. With this technique, the algorithm only needs to hold a running set of base algorithms with size $O(\log T)$, while maintaining a near-optimal $\tilde{O}(\sqrt{I})$ adaptive regret bound.
\section{Adaptive Regret in Multi-Armed Bandits}
In this section, we study efficient adaptive regret minimization in the adversarial multi-armed bandit (MAB) setting. We propose a procedure  (Algorithm \ref{alg bandit}) that achieves $\tilde{O}(\sqrt{nI})$ adaptive regret, using only two queries. 
At a high level, the algorithm maintains a set of instances of the EXP3 algorithm (i.e., base learners) and aggregates them via a MWU meta-algorithm. 

\begin{algorithm}[ht!]
\caption{Strongly Adaptive Bandit Learner (StABL)}
\label{alg bandit}
\begin{algorithmic}[1]
\STATE \textbf{Input:} general EXP3 algorithm $\mathcal{A}$ and horizon $T$. 
\STATE Construct interval set $S=\{[s2^k,(s+1)2^k-1] \mid 2+\log\log T\le k\le \log T, s\in \mathbb{N^{+}}\}$.
\STATE Construct $B=\log T-(1+\log\log T)$ independent instances of EXP3 algorithm $\mA_k$, where $\mA_k$ optimizes each $\{I\in S| 2^k=|I|\}$ one after another since they don't overlap.
\STATE Denote $w_t(k)$ to be the weight assigned to $\mA_k$ at time $t$ by the meta-algorithm.
\STATE Denote $v(t,k)\in \mathbb{R}^n$ to be the distribution over arms by $\mA_k$ at time $t$, and $v(t, k)_i$ to be the probability of sampling arm $i$ by $\mA_k$ at time $t$.
\STATE Define $\eta_k=\min\left\{1/2\sqrt{n},1/\sqrt{n|2^k|}\right\}$, and initialize $w_1(k)=\eta_k$ for all $k\in [B]$.
\FOR{$\tau = 1, \ldots, T$}
\STATE Let $W_t=\sum_k w_t(k)$ and   $p(t)=\frac{1}{W_t}(...,w_t(k),...)$  be the distribution over the base learners.
\STATE For all $i\in [n]$, let $$P(t)_i=\frac{\max_{k} v(t,k)_i^2}{2\sum_{i} \max_{k} v(t,k)_i^2} +\frac{\sum_k v(t,k)_i}{2B}$$ \\
\texttt{// This defines a probability distribution over $n$ arms.}
\STATE Sample $x_t \sim \sum_k p(t)_k v(t,k)$, and in parallel sample $x'_t\sim P(t)$.
\\ \texttt{// Only the second sample $x'_t$ will be used for weight updating.}
\STATE Play $x_t $, suffer loss $\ell_t^{\top} e_{x_t} $ and observe loss $\ell_t^{\top} e_{x'_t}$. Compute loss estimator 
$$\hat{\ell}_t=\mathbf{1}_{i=x'_t} \frac{1}{P(t)_{x'_t}} \ell_t^{\top}e_{x'_t}.$$ 
\STATE Update the weight $v(t+1,k)$ of each EXP3 instance with loss estimator $\hat{\ell}_t$, via Algorithm \ref{alg:exp3}.
\STATE Update the meta-algorithm's weights over base learners via the loss estimator $\hat{\ell}_t$. For each $k$, update $w_{t+1}(k)$ as follows,
$$
w_{t+1}(k)=\left\{
\begin{array}{lcl}
{\eta_k} & & {2^k | t+1}\\
{w_{t}(k)\left(1+\eta_k \tilde{r}_{t}(k)\right)} & & \textbf{else}
\end{array}\right.
$$
where $\tilde{r}_{t}(k)=\hat{\ell}_t^{\top} \sum_k p(t)_k v(t,k)  -\hat{\ell}_t^{\top}v(t,k)$.
\ENDFOR
\end{algorithmic}
\end{algorithm}

\begin{algorithm}[ht!]
\caption{Sub-routine: EXP3 with a General Loss Estimator}
\label{alg:exp3}
\begin{algorithmic}[1]
\STATE Input: horizon $T$, learning rate $\eta$ and $w_1=\frac{\mathbf{1}}{n}$.
\FOR{$t = 1, \ldots, T$}
\STATE Play $i_t \sim w_t$.
\STATE Get unbiased estimator of loss $\tilde{\ell}_t$.
\STATE Update $y_{t+1}(i)=w_t(i)e^{-\eta \tilde{\ell}_t(i)}$, $w_{t+1}=\frac{y_{t+1}}{\|y_{t+1}\|_1}$.
\ENDFOR
\end{algorithmic}
\end{algorithm}

\begin{theorem}[Adaptive regret minimization for multi-armed bandits]
\label{thm: bandit}
For the multi-armed bandits problem with $n$ arms and $T$ rounds,  Algorithm \ref{alg bandit} achieves an expected adaptive regret bound of $O\left(\sqrt{nI\log n } \log^{1.5} T\right)$, using two queries per round.
\end{theorem}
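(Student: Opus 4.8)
The plan is to run the classical two-level reduction for strongly adaptive regret, but to feed every weight update---both the base EXP3 instances $\mA_k$ and the meta multiplicative-weights layer---from the single importance-weighted estimator $\hat\ell_t$, and then to charge the resulting variance separately to the two components of the exploration distribution $P(t)$. The first thing I would record is that $\hat\ell_t$ is conditionally unbiased: since $x'_t\sim P(t)$ and $\hat\ell_t(i)=\mathbf 1_{i=x'_t}\,\ell_t(i)/P(t)_i$, we have $\E[\hat\ell_t\mid \mathcal F_{t-1}]=\ell_t$, so the estimated meta-reward $\tilde r_t(k)$ and the estimated base losses are unbiased for their true counterparts. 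I would also use that the arm actually played satisfies $\E[\ell_t(x_t)\mid\mathcal F_{t-1}]=\ell_t^\top\bar v(t)$ with $\bar v(t):=\sum_k p(t)_k v(t,k)$, so every regret statement may be made against the true loss in expectation.

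Next comes the interval reduction. By the geometric-covering property of $S$, any interval $I$ with $|I|\ge 4\log T$ splits into $M=O(\log T)$ consecutive pieces $I_1,\dots,I_M\in S$, where each $I_m$ is exactly one epoch optimized by a single base learner $\mA_{k_m}$ with $|I_m|=2^{k_m}$, plus boundary remainders of total length $O(\log T)$; intervals with $|I|<4\log T$ (and the remainders) are charged trivially, since their regret is at most their length $=O(\sqrt{n|I|}\log^{1.5}T)$. Writing $i^*_m=\argmin_i\sum_{t\in I_m}\ell_t(i)$ and telescoping, it then suffices to bound, for each $m$,
$$
\underbrace{\sum_{t\in I_m}\E\!\big[\ell_t^\top(\bar v(t)-v(t,k_m))\big]}_{\text{meta-regret against }\mA_{k_m}}
+\underbrace{\sum_{t\in I_m}\E\!\big[\ell_t^\top v(t,k_m)-\ell_t(i^*_m)\big]}_{\text{internal regret of }\mA_{k_m}},
$$
since the best fixed arm on $I$ is no better than $i^*_m$ on the sub-interval $I_m$.

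For the internal (base) regret I would apply the textbook EXP3 inequality to $\mA_{k_m}$ with estimator $\hat\ell_t$, giving $\tfrac{\log n}{\eta}+\tfrac{\eta}{2}\sum_{t\in I_m}\E[\sum_i v(t,k_m)_i\hat\ell_t(i)^2]$. Here the averaged component of $P(t)$ does the work: because $P(t)_i\ge\frac{\sum_k v(t,k)_i}{2B}\ge\frac{v(t,k_m)_i}{2B}$, each second moment is at most $\sum_i v(t,k_m)_i/P(t)_i\le 2Bn$, and with the balanced step $\eta\asymp\sqrt{\log n/(Bn|I_m|)}$ this is $O(\sqrt{n|I_m|\log n\,\log T})$ per epoch. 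For the meta-regret I would run the $\ln(1+x)\ge x-x^2$ (Prod-style) analysis of the recursion $w_{t+1}(k)=w_t(k)(1+\eta_k\tilde r_t(k))$ together with the epoch resets $w_{t+1}(k)=\eta_k$ and the initialization $w_1(k)=\eta_k$, yielding $O\!\big(\tfrac{\log(1/\eta_{k_m})}{\eta_{k_m}}+\eta_{k_m}\sum_{t\in I_m}\E[\tilde r_t(k_m)^2]\big)$ on each epoch. The squared component of $P(t)$ is tailored to this term: bounding $(\bar v(t)_i-v(t,k_m)_i)^2\le\max_k v(t,k)_i^2$ and using $P(t)_i\ge\frac{\max_k v(t,k)_i^2}{2\sum_j\max_k v(t,k)_j^2}$ gives $\E[\tilde r_t(k_m)^2]\le 2n\sum_j\max_k v(t,k)_j^2\le 2Bn$, while the averaged floor also guarantees the per-step boundedness $|\tilde r_t(k)|\le 2B$ needed to license the Prod step. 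Plugging $\eta_{k_m}=1/\sqrt{n|I_m|}$ (which is small enough thanks to the $\log\log T$ cutoff forcing $|I_m|\ge 4\log T$) makes each epoch contribute $O(\sqrt{n|I_m|}\log T)$.

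Finally I would sum over the $M=O(\log T)$ epochs and apply Cauchy--Schwarz, $\sum_m\sqrt{|I_m|}\le\sqrt{M\sum_m|I_m|}=O(\sqrt{|I|\log T})$, which converts the per-epoch base bound $O(\sqrt{n|I_m|\log n\,\log T})$ and meta bound $O(\sqrt{n|I_m|}\log T)$ into a total of $O(\sqrt{n|I|\log n}\,\log^{1.5}T)$, as claimed. The step I expect to be the main obstacle is exactly this simultaneous variance control: arguing that one exploratory query drawn from $P(t)$ keeps both the base second moments and $\E[\tilde r_t(k_m)^2]$ at $O(Bn)$, which is why $P(t)$ must superpose the squared-probability mass (matched to the meta increments, which are squared differences of distributions) with the averaged-probability mass (matched to the base updates, which are linear in the learner's own distribution) --- and verifying, through this second floor, the boundedness $|\tilde r_t(k)|\le 2B$ that the multiplicative meta-update requires, while ensuring the reset structure prevents the $\log(1/\eta_{k_m})$ overhead from accumulating across epochs.
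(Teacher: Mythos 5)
Your proposal is correct and follows essentially the same route as the paper's own proof: decoupling the play/observation randomness via unbiasedness of $\hat{\ell}_t$, bounding each base EXP3 instance through the importance-weight floor $P(t)_i \ge \tfrac{1}{2B}\sum_k v(t,k)_i$ (the paper's Lemma with $C=2B$), running a Prod-style meta analysis whose variance $\mathbb{E}[\tilde{r}_t(k)^2]\le 2nB$ is controlled by the squared component of $P(t)$ together with the boundedness $|\tilde{r}_t(k)|\le 2B$, and finishing with the geometric covering plus Cauchy--Schwarz. The only step you leave implicit is the control of the total weight within an epoch that underlies your $\log(1/\eta_{k_m})/\eta_{k_m}$ potential term --- the paper establishes this by an induction showing $\tilde{W}_t \le t(\log t+1)$ for the pseudo-weights, which hinges on the identity $\sum_k w_t(k)\tilde{r}_t(k)=0$.
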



The main idea is to use  EXP3-type algorithms for both the black-box base learners and the meta-algorithm  in the typical adaptive regret framework. Directly using EXP3 in the MAB setting will fail, because the weight distribution   might become unbalanced over time and the unbiased estimator of the loss will have huge variance. 
When the unbiased estimator is propagated to the meta algorithm, it serves now as the value of ``arm'' which leads to a sub-optimal regret. 

The remedy is to use an additional evaluation to create unbiased estimators of the loss vector with controllable variance, for updating both experts and the meta-algorithm. A na\"ive (but sub-optimal) choice is to sample uniformly over the arms to observe $x'_t$, and update both experts and the meta-algorithm (line 10 in Algorithm \ref{alg bandit}). 
By doing this, the loss estimator $\hat{\ell}_t$ is not only unbiased, but also norm bounded by $n$. As a result, the variance term in the classical EXP3 analysis is   bounded by $n^2$. Thus, the regret of the base learners,  as well as the meta-algorithm, are    $\sqrt{n}$ factor worse than optimal.

Instead of  the na\"ive uniform exploration, we choose the  distribution of the additional query  to be the average of a uniform exploration among all  base learners' choices and a near-optimal distribution for the meta-algorithm. Below we briefly explain the design of this importance sampling $P(t)_i$ (line 9 in Algorithm \ref{alg bandit}). $P_t$ is the average of two distributions, the first one controlling the regret of the meta learner while the second one controlling the regret of base learners. If $P_t$ is merely the second distribution, it achieves near-optimal regret for base learners, with the excessive term $\sqrt{n}$ improved to $\sqrt{\log T}$. However, the second distribution alone will make the regret of the meta learner unbounded, thus we mix it with the first distribution which is designed to control the regret of the meta learner. Such mixing is known to only affect the regret by a constant for EXP-3 type algorithms.

\subsection{Proof Sketch}
In the following, we slightly abuse notation and let $\ell_t^\top x = \ell_t^\top e_x$ for $x\in[n]$.
The proof consists of three steps: decomposing the randomness of playing $x_t$, bounding the base learners' regret, and analyzing the regret of the meta-algorithm.

\textbf{Step 1:} There are two sources of randomness in the algorithm, namely, randomness $\textbf{pl}$ in sampling which arm $x_t$ to play, and $\textbf{ob}$ in sampling which arm $x'_t$ to observe and update weights. The key observation is that $\textbf{pl}$ is independent of $\textbf{ob}$. Thus, we have the following equivalence via linearity and the tower property of expectations, for any fixed interval $I$ and arm $x^*$,
$$
\mathbb{E}_{\textbf{pl,ob}}\left[ \sum_{t\in I} \ell_t^{\top} x_t-\sum_{t\in I} \ell_t^{\top} x^*\right]=\mathbb{E}_{\textbf{ob}}\left[ \sum_{t\in I} \hat{\ell}_t^{\top}  \sum_k p(t)_k v(t,k)-\sum_{t\in I} \hat{\ell}_t^{\top} x^*\right],
$$
which decouples the randomness of playing $x_t$ from observing $x'_t$.

\textbf{Step 2:} We prove the following lemma on EXP3 algorithms with general unbiased loss estimators, which gives a regret guarantee for Algorithm \ref{alg:exp3}.

\begin{lemma}[Regret for EXP3]
Given $\tilde{\ell}_t$, an unbiased estimator of $\ell_t$, such that for some distribution $z_t$, $\tilde{\ell}_t(i) = \frac{1}{z_t(i)} \ell_t(i)$ and $\tilde{\ell}_t(j) = 0$ for $j\neq i$ with probability $z_t(i)$. Suppose $z_t$ satisfies $w_t(i) \le Cz_t(i)$ for all $i$, where $w_t(i)$ represents the weight of the $ith$ expert at time $t$. Algorithm \ref{alg:exp3} using $\tilde{\ell}_t$ with $\eta=\sqrt{\frac{\log n}{T n C}}$ has regret bound $2\sqrt{ CnT \log n}$.
\end{lemma}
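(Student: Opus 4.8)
The plan is to run the textbook exponential-weights potential argument, deferring the boundedness hypothesis $w_t(i)\le Cz_t(i)$ to the very end where it controls only the second-order (variance) term. Throughout I write $w_t$ for the normalized weight vector maintained by Algorithm~\ref{alg:exp3}, so $\sum_i w_t(i)=1$, and I set $Z_t=\sum_j w_t(j)e^{-\eta\tilde{\ell}_t(j)}$ for the normalizer, so that $w_{t+1}(i)=w_t(i)e^{-\eta\tilde{\ell}_t(i)}/Z_t$. The target regret is the expected quantity $\E\big[\sum_t \langle w_t,\ell_t\rangle-\sum_t\ell_t(i^*)\big]$ for an arbitrary comparator arm $i^*$, which equals $\E\big[\sum_t\ell_t(i_t)-\sum_t\ell_t(i^*)\big]$ since $i_t\sim w_t$.

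First I would establish a deterministic, per-realization inequality in terms of the estimated losses. Because $\tilde{\ell}_t(j)\ge 0$ and $\eta>0$, the elementary bound $e^{-x}\le 1-x+x^2$ for $x\ge 0$ gives $Z_t\le 1-\eta\langle w_t,\tilde{\ell}_t\rangle+\eta^2\sum_j w_t(j)\tilde{\ell}_t(j)^2$, and hence, via $\log(1+u)\le u$,
$$
\log Z_t \le -\eta\langle w_t,\tilde{\ell}_t\rangle+\eta^2\sum_j w_t(j)\tilde{\ell}_t(j)^2.
$$
Telescoping the multiplicative update along $i^*$ yields $\log w_{T+1}(i^*)=\log w_1(i^*)-\eta\sum_t\tilde{\ell}_t(i^*)-\sum_t\log Z_t$; using $w_1(i^*)=1/n$ and $w_{T+1}(i^*)\le 1$ gives $\sum_t\log Z_t\ge -\log n-\eta\sum_t\tilde{\ell}_t(i^*)$. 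Combining the two displays and dividing by $\eta$ produces the deterministic bound
$$
\sum_t\langle w_t,\tilde{\ell}_t\rangle-\sum_t\tilde{\ell}_t(i^*)\le \frac{\log n}{\eta}+\eta\sum_t\sum_j w_t(j)\tilde{\ell}_t(j)^2.
$$

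Next I would take expectations and invoke the two structural facts about the estimator. Conditioning on the history through round $t$ (so that $w_t$ and $z_t$ are fixed), unbiasedness $\E_t[\tilde{\ell}_t(j)]=\ell_t(j)$ converts the left-hand side into the true expected regret. For the variance term, the estimator is supported on one coordinate with $\tilde{\ell}_t(j)=\ell_t(j)/z_t(j)$ sampled with probability $z_t(j)$, so $\E_t[\tilde{\ell}_t(j)^2]=\ell_t(j)^2/z_t(j)\le 1/z_t(j)$ using $\ell_t(j)\in[0,1]$; therefore
$$
\E_t\Big[\sum_j w_t(j)\tilde{\ell}_t(j)^2\Big]\le \sum_j\frac{w_t(j)}{z_t(j)}\le Cn,
$$
where the last step is exactly the hypothesis $w_t(j)\le Cz_t(j)$ summed over the $n$ arms. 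Feeding this into the expectation of the deterministic bound gives expected regret at most $\frac{\log n}{\eta}+\eta CnT$, and substituting $\eta=\sqrt{\log n/(TnC)}$ balances the two terms at $\sqrt{CnT\log n}$ each, for the claimed total $2\sqrt{CnT\log n}$.

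The potential telescoping and the learning-rate algebra are routine. The one step that requires genuine care is the conditioning in the expectation step: $w_t$ is itself random (it depends on all past estimators), so the variance bound must be applied conditionally on the filtration up to time $t$ and then assembled via the tower property rather than treated as a deterministic quantity. One should also explicitly confirm the nonnegativity of $\tilde{\ell}_t$ that underlies $e^{-x}\le 1-x+x^2$, which holds since $\ell_t\ge 0$ and $z_t>0$; this is what lets the analysis use losses (rather than gains) without sign issues.
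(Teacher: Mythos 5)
Your proof is correct and takes essentially the same route as the paper's: both reduce the expected regret to the standard EXP3 second-order bound $\frac{\log n}{\eta}+\eta\sum_t \E\bigl[\sum_i w_t(i)\tilde{\ell}_t(i)^2\bigr]$, bound the conditional second moment by $\sum_i w_t(i)/z_t(i)\le Cn$ via the hypothesis $w_t(i)\le Cz_t(i)$ and the tower property, and then tune $\eta=\sqrt{\log n/(TnC)}$. The only difference is presentational: you derive the potential/normalizer inequality from first principles, whereas the paper simply cites the standard EXP3 analysis for that step.
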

As a result, noticing $C=2\log T$ in our algorithm, we have that for any expert $\mA_k$, its regret on interval $I$ can be bounded by $O(\sqrt{nI\log n \log T})$.

\textbf{Step 3:} We analyze the regret of the meta-learner, which is $\mathbb{E}_{\textbf{pl, ob}}\left[ \sum_{t\in I} \ell_t^{\top} x_t-\sum_{t\in I} \ell_t^{\top} v(t,k)\right]=\mathbb{E}_{\textbf{ob}}\left[ \sum_{t\in I} \tilde{r}_t(k)\right]$. Define the pseudo-weight $\tilde{w}_t(k)$ to be $\tilde{w}_t(k)=\frac{w_t(k)}{\eta_k}$, and $\tilde{W}_t=\sum_k \tilde{w}_t(k)$, we first prove $\tilde{W}_t\le t(\log t +1)$ using induction, which leads to the following estimate
$$
\mathbb{E}_{\textbf{ob}}\left[\sum_{t\in I}\tilde{r}_t(k)\right] \le \eta_k\mathbb{E}_{\textbf{ob}}\left[ \sum_{t\in I}\tilde{r}_t^2(k)\right]+\frac{2 \log T}{\eta_k}.
$$
Then we show the term $\mathbb{E}_{\textbf{ob}}\left[ \tilde{r}_t^2(k)\right]$ can be bounded by:
$$
\mathbb{E}_{\textbf{ob}}\left[ \tilde{r}_t^2(k)\right]\le \mathbb{E}_{\textbf{ob},<t}\left[\sum_{i=1}^n  \frac{(\ell_t^{\top}e_i)^2}{P(t)_i} \left(\max_k e_i^{\top}v(t,k)\right)^2 \right]\le 2n \log T,
$$
which implies the regret of our algorithm on any interval $I\in S$, can be bounded by $O(\log T \sqrt{nI \log n})$. Finally, such regret can be extended to any interval at the cost of an additional $\sqrt{\log T}$ term, by Cauchy-Schwarz (each arbitrary interval can be decomposed into a disjoint union of 
$O(\log T)$ geometric intervals as in \cite{daniely2015strongly}).

\section{Adaptive Regret in the BCO Setting}\label{sec:bco}
The result from the previous section inspires us to consider the following question: can we use a similar approach to achieve near-optimal adaptive regret in the bandit convex optimization (BCO) setting with constant number of queries? The answer is yes, that near-optimal adaptive regret is achievable with three queries, at the cost of extra logarithmic terms in the regret.

The algorithm for adaptive regret in the BCO setting (see appendix) has a similar spirit as Algorithm \ref{alg bandit}. At time $t$, each expert $k$ asks for gradient estimation at a point $\mA_k(t)$, and we only randomly sample one expert to get the gradient estimation. If we sample expert $k_t$, the estimation for $\mA_{k_t}$ is 
$$
\frac{dm\mathbf{u}(\ell_t(\mA_{k_t}(t)+\delta_t \mathbf{u})-\ell_t(\mA_{k_t}(t)))}{\delta_t},
$$
and the gradient estimation for the rest of experts is 0, where $m$ is the number of experts, $\delta$ is a small constant and $\mathbf{u}$ is a random unit vector. Such estimation is classic in the two-point feedback BCO setting \cite{agarwal2010optimal}, in fact all we need is a (nearly) unbiased estimation of loss with bounded absolute value. Then we can use the same approach as the MAB case to make an unbiased estimation of the experts' losses, by setting
\begin{align*}
\tilde{\ell}_t(\mA_k(t))=m \ell_t(\mA_{k_t}(t)) 1_{[k=k_t]}.
\end{align*}
We have the following regret guarantee for Algorithm 3. The proof is delayed to the appendix.

\begin{theorem}\label{thm: bco}
In the BCO setting, Algorithm 3 with $\delta_t=\frac{1}{\kappa T}$ uses three queries per round and achieves an expected adaptive regret bound of 
\[O\left(dGD\sqrt{I}\log^2 T \right). \]
\end{theorem}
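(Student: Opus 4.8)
The plan is to mirror the three-step architecture of the proof of \cref{thm: bandit}, replacing the EXP3 base learners by bandit online gradient descent (OGD) learners that consume a two-point gradient estimator, while retaining an EXP3-type meta-algorithm over the $m = O(\log T)$ experts. Fixing an interval $I \in S$ and a comparator $x^* \in \K$, I would first decouple the randomness. There are three independent sources: the play $\mathbf{pl}$ of $x_t$, the uniform choice $k_t \in [m]$ of which expert to probe, and the random unit vector $\mathbf{u}$ used in the gradient probe. Because $\tilde\ell_t(\mA_k(t)) = m\,\ell_t(\mA_{k_t}(t))\,1_{[k=k_t]}$ has conditional expectation $\ell_t(\mA_k(t))$ under uniform $k_t$, the tower property reduces the expected SA-regret to the meta-learner's regret against the sampled expert plus that expert's regret against $x^*$, exactly as in Step 1 of the MAB proof.

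For the base learners (Step 2), each expert runs OGD on the $\delta_t$-ball-smoothed loss $\hat\ell_t(x) = \mathbb{E}_v[\ell_t(x+\delta_t v)]$, fed the estimator $g_t = \frac{dm\,\mathbf{u}}{\delta_t}(\ell_t(\mA_{k_t}(t)+\delta_t \mathbf{u}) - \ell_t(\mA_{k_t}(t)))$ when $k_t=k$ and $0$ otherwise. Two facts drive the argument: conditioned on $k_t=k$, $g_t$ is unbiased for $\nabla\hat\ell_t(\mA_k(t))$ (the subtracted $\ell_t(\mA_{k_t}(t))$ is a control variate with zero mean against $\mathbf{u}$), so with the $1/m$ sampling probability the sparse estimator unbiasedly recovers the gradient; and by $G$-Lipschitzness the difference is $O(G\delta_t)$, so $\|g_t\|\le dmG$ \emph{independent} of $\delta_t$ — the crucial two-point cancellation. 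Since each fixed expert is probed only $|I|/m$ times in expectation, $\mathbb{E}[\sum_{t\in I}\|g_t\|^2] \le |I|\,d^2mG^2$, and the standard OGD bound gives a per-expert smoothed regret of $O(dDG\sqrt{m|I|}) = O(dDG\sqrt{|I|\log T})$. I would then pay the $O(G\delta_t)$ smoothing bias and the $O((\delta_t/r)DG)$ shrinkage bias per round incurred by optimizing over $(1-\delta_t/r)\K$ to keep probes feasible; with $\delta_t = \tfrac{1}{\kappa T}$ both summed biases are $O(GD)$, hence lower order.

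For the meta-learner (Step 3), the EXP3-type MWU runs over $m$ experts with estimated losses $\tilde\ell_t$, which are bounded by $m\cdot O(GD)$ with second moment controlled by the uniform $1/m$ sampling — the role played by the $P(t)$ construction in \cref{alg bandit}. I would reuse the Step 3 analysis of the MAB proof essentially verbatim: bound the pseudo-weight sum $\tilde W_t \le t(\log t + 1)$ by induction and then the variance term $\mathbb{E}[\tilde r_t^2(k)]$, obtaining a meta-regret of $O(GD\sqrt{|I|}\log T)$ on each geometric interval. Summing the base and meta contributions over $I\in S$ and extending to an arbitrary interval via its decomposition into $O(\log T)$ geometric intervals (Cauchy–Schwarz, as in \cite{daniely2015strongly}) costs a further $\sqrt{\log T}$ and yields the claimed $O(dGD\sqrt{I}\log^2 T)$.

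The main obstacle is controlling the gradient estimator simultaneously in bias and variance when only one expert is probed per round. The $m$-scaling needed to unbias the sparse estimator inflates both $\|g_t\|$ and the meta-loss $\tilde\ell_t$, and the $\delta_t$ in the denominator threatens to blow up the variance; the argument survives only because the two-point difference is $O(G\delta_t)$ so that $\delta_t$ cancels in $\|g_t\|$, and because $\delta_t = 1/(\kappa T)$ is tuned precisely so the smoothing and shrinkage biases vanish and the condition number $\kappa$ drops out, leaving the final $\kappa$-free bound. Verifying that this cancellation is compatible with the meta-learner's variance control — i.e.\ that the same probe serves both the sparse gradient estimate and the bounded meta-loss estimate without degrading either — is the delicate technical point.
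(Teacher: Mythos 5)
Your proposal matches the paper's proof essentially step for step: the same decoupling of the play/probe randomness via the tower property, the same per-expert analysis of OGD on the ball-smoothed loss over the shrunk domain $\hat{\K}$ with the two-point estimator (paying the $(2+\kappa)\delta GDT$ bias and killing the $\kappa$-dependence via $\delta_t = 1/(\kappa T)$), the same pseudo-weight induction for the meta-learner, and the same Cauchy--Schwarz extension from geometric intervals to arbitrary ones. The only difference is cosmetic: you exploit the $1/m$ probe probability inside the second-moment bounds (giving slightly tighter $\sqrt{\log T}$ factors), whereas the paper uses the cruder worst-case bounds $\E[\|\tilde{g}_t\|_2^2] \le d^2G^2\log^2 T$ and $|\tilde{r}_t(k)| \le GD\log T$; both land within the stated $O\left(dGD\sqrt{I}\log^2 T\right)$.
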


Theorem \ref{thm: bco} also directly applies to the full-information setting, improving the query complexity of all previous works in adaptive regret from $\Theta(\log T)$ to a constant, while preserving the optimal regret.

Compared to \cite{zhao2021bandit}, note that though their algorithm uses one fewer query, they considered only the weaker notion of adaptive regret, failing to achieve dependence on interval length which leads to a worse $\sqrt{T}$ dependence on horizon. We discuss how to further reduce the required number of queries by one for strongly-adaptive regret in the appendix, by using the linear surrogate loss idea of \cite{zhao2021bandit} in our algorithm. 

\section{Experiments}
In this section, we evaluate the proposed algorithms on synthetic data and the downstream task of hyperparameter optimization. We note that even though the algorithms are stated in terms of losses, we find that in practice, using rewards instead of losses lead to better performance.  

\subsection{Learning from Expert Advice}
\paragraph{Experimental Setup} We first consider the learning with expert advice setting with linear rewards, and demonstrate the advantage of using an adaptive algorithm in changing environments. For simplicity, the arms and experts are equivalent in our construction. More specifically, there are $N$ arms, indexed from $0$ to $N-1$, and there are $N$ experts, where the $i$-th expert (zero-indexed) always suggests pulling the arm $i$. We take $N=30$, time horizon $T=4096$, and for each time step, we randomly generate a baseline reward $\tilde{r}_t \in \reals^N$ where $\tilde{r}_{t, i}$ is drawn from the uniform distribution over $[0, 0.5)$. To simulate the volatile nature of the environment, we divide the time horizon into $4$ intervals, and add an additional reward to the baseline reward of a different expert in each interval. In particular, for  $t\in [0, 1023]$, we set the reward $r_{t, 0} = \tilde{r}_{t, 0} + 0.5$ and $r_{t, i} = \tilde{r}_{t, i}$ for $i \neq 0$; for $t\in [1024, 2047]$, we do the same for expert $1$, and so on. The reward at each time step is $r_t^\top x_t$, where $x_t$ is the point played. 

\paragraph{Analysis} In the first experiment, we compare the performance of the non-adaptive EXP3 algorithm with uniform exploration, StABL, and the StABL algorithm with naive exploration. In StABL with naive exploration, the observation query is sampled with the uniform distribution (Line 10 in Algorithm \ref{alg bandit} is replaced with the uniform distribution). We run all algorithms 5 times and plot the moving average of their reward with a window of size 50 in the left subfigure of Figure \ref{random_rewards}. Both StABL and and StABL Naive can adapt to changing environments and recover after the best arm has changed. In contrast, EXP3 is optimal in the first interval, but its performance quickly degrades in the subsequent intervals. Compared to StABL, StABL Naive is worse in most intervals, reflecting the suboptimal performance resulting from the naive observation sampler. 


\begin{figure}
\centering
\begin{subfigure}{0.5\textwidth}
  \centering
  \includegraphics[width=.8\linewidth]
  {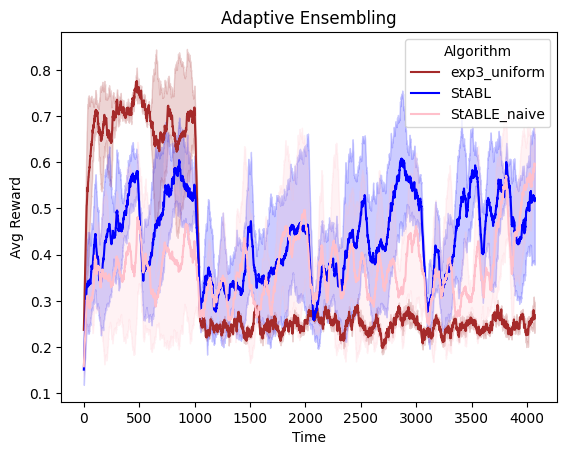}
  \label{fig:sub1}
\end{subfigure}%
\begin{subfigure}{0.5\textwidth}
  \centering
  \includegraphics[width=.8\linewidth]{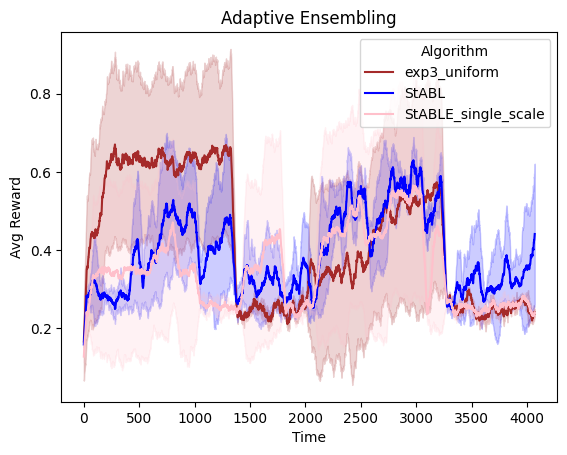}
  \label{fig:sub2}
\end{subfigure}
\caption{Comparison plots of the algorithm rewards in the learning with expert advice setting. The right subfigure shows the performance of the algorithms when the best arm changes at random intervals, and demonstrates the advantage of using base algorithms with varying history lengths }
\label{random_rewards}
\end{figure}

To understand the importance of having base algorithms with varying history lengths, we also compare StABL with a variant, StABL Single Scale, where the base algorithm only has one history length of 1024. Such a variant would perform well in the previous experiment, since the intervals have the same length. In this experiment, we randomly generated time steps at which the best arm changes, and they are 1355, 1437, 1798, 3249, for a time horizon of 4096. As before, we run each algorithm 5 times and compare the moving average reward in the right subfigure of Figure \ref{random_rewards}. In this experiment, StABL Single Scale struggles in intervals that do not conform to a length of 1024. Thus, even though StABL requires more memory and compute by running $\log T$ base algorithms, it is more robust to volatile environments that can have irregular changes. 


\subsection{Algorithm Selection for Hyperparameter Optimization}

\paragraph{Algorithm as Arms} We compare the effectiveness of bandit-feedback regret minimization algorithms for the downstream task of choosing the best algorithms for minimizing a blackbox function. Specifically, we view each round as a selection process between an ensemble of evolutionary strategies, each with different algorithm parameters, such as perturbation, gravity, visibility and pool size \cite{yang2013firefly}. Therefore each arm represents a specific evolutionary algorithm and the rewards of each arm are observed per-round in a bandit fashion, indicating the performance of the algorithm at that round, given all of the data observed so far. 

\paragraph{Benchmark and Reward Signal} The reward is determined by the underlying task of blackbox optimization, which is done on Black-Box Optimization Benchmark (BBOB) functions \citep{tuvsar2016coco}. BBOB functions are usually non-negative and we are in the simple regret setting, in which we want to find some $x$ such that $f(x)$ is minimized. Therefore we use the decrease of the objective, upon evaluation of the algorithm's suggestion, as the reward. Specifically, if $\mathcal{D}_i = (x_i, f(x_i))$ are all the evaluations so far in round $k$, then each algorithm $\mathcal{A}_j$ suggests $x_j = \mathcal{A}_j (\mathcal{D}_j)$ and the corresponding reward is $r_j = \max(\min_i (f(x_i)) - f(x_j), 0)$. Note that this reward is always positive and we normalize our BBOB functions so that it is within $[0, 1]$. Note that this reward signal is sparse, so we also add a regularization term of $-\lambda f(x_j)$ with $\lambda = 0.01$ of the negative objective into the reward. In addition to normalization, we also apply random vectorized shifts and rotations on these functions, as well as adding observation noise to the evaluations.

\paragraph{Analysis}
We ran the four algorithms: the uniform random algorithm, classic EXP3 algorithm, the EXP3 algorithm with uniform exploration, and StABL with history lengths $[20, 40, 80, 160, 320, 640, 1280, 2560]$. We run each of our algorithms in dimensions $d = 32, 64$ and optimize for $1000, 2000$ iterations with $5$ repeats. For metrics, we use the log objective curve, as well as the performance profile score, a gold standard in optimization benchmarking (higher is better) \cite{dolan2002benchmarking}. Our full results are in the appendix and here, we focus on the SPHERE function (\cref{fig:sphere}), where we see that StABL generally performs better adaptation than the EXP3 algorithms leading to better optimization throughout. Specifically, for the first half of optimization, both StABL and EXP3-Uniform enjoy a consistent performance advantage in performance, with respect to the Uniform strategy. However, this advantage sharply drops after around 600 Trials, after which the EXP3 strategy is no longer competitive against Uniform and ends up worse, as its performance curve abruptly stops at Trial 700. StABL generally tracks with the EXP3 strategy throughout this process but is noticeably better at adapting and maintaining its advantage after the critical turning point.

\begin{figure}[htb!]
\centering
\includegraphics[width=.8\linewidth]{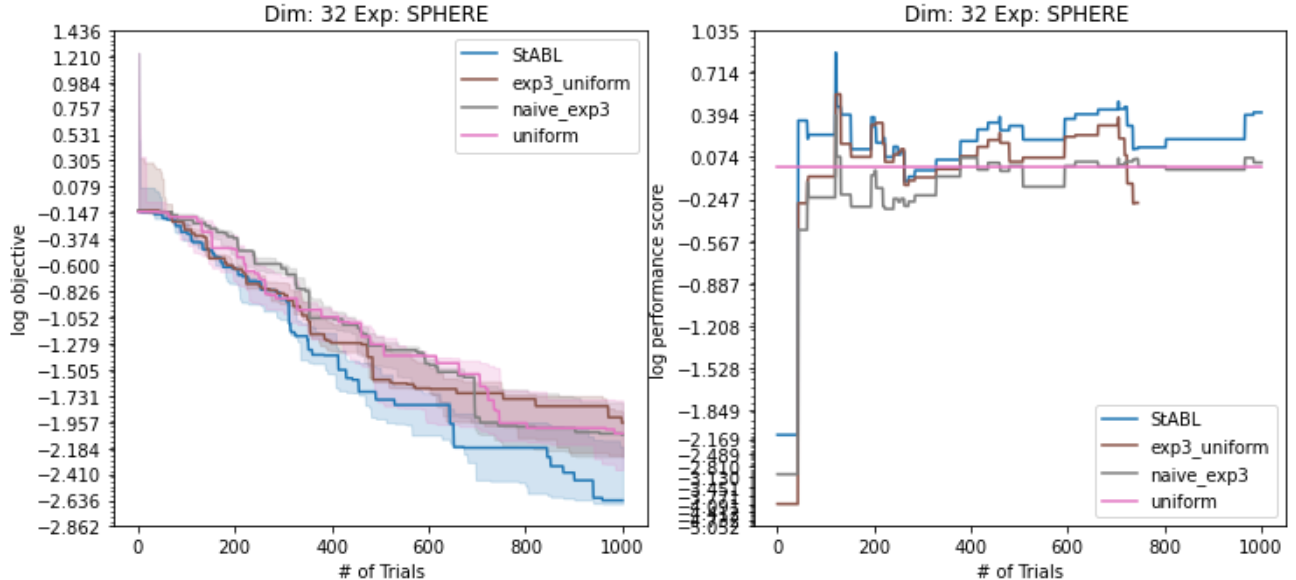}
\caption{Algorithm comparison plots of the log objective (lower is better) and the performance profile score against the Uniform baseline (higher is better) for minimizing the 32-dimensional SPHERE across 1000 trials.}
\label{fig:sphere} 
\end{figure}

\section{Conclusions}
We study adaptive regret in the limited observation model. By making merely one additional query with a carefully chosen distribution, our algorithm achieves the optimal $\tilde{O}(\sqrt{nI})$ adaptive regret bound in the multi-arm bandit setting. This result not only improves the state-of-the-art query efficiency of $O(\log \log T)$ in \cite{lu2022efficient}, but matches the lower bound in the bandit setting \cite{daniely2015strongly}, thus providing a sharp characterization of the query efficiency of adaptive regret. As an extension, we prove that the optimal $\tilde{O}(\sqrt{I})$ adaptive regret can be achieved in the bandit convex optimization setting, with only two additional queries. We also conduct experiments to demonstrate the power of our algorithms under changing environments and for downstream tasks. We list some limitations of this work and potential directions for future research, elaborated below.

\textbf{Reduce the logarithmic dependence in $T$:} Although our regret bound is tight in the leading parameters $n,I$, it involves a worse dependence on $\log T$. For the full-information setting, the best known result in \cite{orabona2016coin} only contains an $\sqrt{\log T}$ dependence. The main open question is thus: can the $\log^{1.5} T$ factor be improved in general?

\textbf{Fewer queries for BCO:} Our BCO Algorithm uses two additional queries to achieve the optimal adaptive regret bound, however for the MAB setting Algorithm \ref{alg bandit} requires only one query. Though BCO is a harder problem than MAB, and the lower bound of \cite{daniely2015strongly} was designed for MAB, we wonder whether an algorithm can use even fewer queries in the BCO setting.

\textbf{Extension to dynamic regret:} In the full-information  setting, any advance in adaptive regret of exp-concave loss implies improved dynamic regret bounds via a black-box reduction, as shown in \cite{lu2022efficient}. However, it seems harder in the bandit setting. The fundamental difficulty is, for the meta-learner, how to exploit exp-concavity (or strong convexity) of the loss in the MAB setting.

\bibliography{references}
\bibliographystyle{iclr2024_conference}

\newpage
\appendix
\section{BCO Algorithms}

\begin{algorithm}[ht!]
\caption{Strongly Adaptive BCO Learner}
\label{alg bco}
\begin{algorithmic}[1]
\STATE \textbf{Input:} OCO algorithm $\mA$ (Algorithm \ref{alg fkm}) and horizon $T$. 
\STATE Construct interval set $S=\{[s2^k,(s+1)2^k-1] \mid 2+\log\log T\le k\le \log T, s\in \mathbb{N}^+\}$.
\STATE Construct $B=\log T-(1+2\log\log T)$ independent instances of the expert BCO algorithm $\mA_k$, where $\mA_k$ optimizes each $\{I\in S| 2^k=|I|\}$ one after another since they don't overlap.
\STATE Denote $w_t(k)$ to be the weight assigned to $\mA_k$ at time $t$ by the meta algorithm.
\STATE Define $\eta_k=\frac{1}{GD}\min \left\{\frac{1}{2},\sqrt{\frac{\log T}{2^k}}\right\}$, and initialize $w_1(k)=\eta_k$.
\FOR{$\tau = 1, \ldots, T$}
\STATE Let $W_t=\sum_k w_t(k)$, denote $p(t)=\frac{1}{W_t}(...,w_t(k),...)$ as the distribution over experts.
\STATE Denote $\mA_k(t)$ to be the prediction of expert $\mA_k$ at time $t$.
\STATE Sample $k_t$ uniformly from $[B]$, sample a random unit vector $\mathbf{u}$ and choose constant $\delta_t$.
\STATE Play $x_t = \sum_k p(t)_k \mA_k(t)$ and suffer loss $\ell_t(x_t) $. Observe losses $\ell_t(\mA_{k_t}(t)+\delta_t \mathbf{u})$ and $\ell_t(\mA_{k_t}(t))$.
\STATE Updating experts: construct gradient estimation for each $\mA_{k}$ as below, then invoke Algorithm \ref{alg fkm}
$$
\frac{d\log T \mathbf{u}(\ell_t(\mA_{k_t}(t)+\delta_t \mathbf{u})-\ell_t(\mA_{k_t}(t)))}{\delta_t} \mathbf{1}_{[k=k_t]}
$$
\STATE Updating weights: construct loss vector estimations
$$
\tilde{\ell}_t(\mA_{k}(t))=\log T \ell_t(\mA_{k_t}(t)) \mathbf{1}_{[k=k_t]}, \ \tilde{\ell}_t(x_t)=\sum_{k} p(t)_k \tilde{\ell}_t(\mA_{k}(t))
$$
\STATE Update the meta algorithm's weights. For each $k$, update $w_{t+1}(k)$ as follows,
$$
w_{t+1}(k)=\left\{
\begin{array}{lcl}
{\eta_k} & & {2^k | t+1}\\
{w_{t}(k)(1+\eta_k \tilde{r}_{t}(k))} & & \textbf{else}
\end{array}\right.
$$
where $\tilde{r}_{t}(k)=\tilde{\ell}_t(x_t)-\tilde{\ell}_t\left(\mA_{k}(t)\right)$.
\ENDFOR
\end{algorithmic}
\end{algorithm}

\begin{algorithm}[ht!]
\caption{Sub-routine: BCO with General Gradient Estimator}
\label{alg fkm}
\begin{algorithmic}[1]
\STATE \textbf{Input}: horizon $T$, learning rate $\eta$ and $x_1\in \K$.
\FOR{$t = 1, \ldots, T$}
\STATE Play $x_t$ and suffer loss $\ell_t(x_t)$.
\STATE Get gradient estimator $\tilde{g}_t$, such that there exists $a,b>0$ and another proxy loss $\hat{\ell}_t$, satisfying:\newline
$\E[\tilde{g}_t]=\nabla \hat{\ell}_t(x_t)$, $\E[\|\tilde{g}_t\|_2]\le b$ and $\forall x\in \hat{\K}$ it holds $|\ell_t(x)-\hat{\ell}_t(x)|\le a$.
\STATE Update $x_{t+1}=\Pi_{\hat{\K}}[x_t-\eta \tilde{g}_t$], where $\hat{\K}=\{x|\frac{1}{1-\kappa\delta_t}x\in \K\}$ is the domain of $\hat{\ell_t}$.
\ENDFOR

\end{algorithmic}
\end{algorithm}

\newpage

\section{Proof of Theorem \ref{thm: bandit}}
\label{sec:pfthm1}
\begin{proof}
The proof consists of three steps: eliminating the randomness of playing $x_t$, analyzing the expert regret, and analyzing the meta algorithm regret.

\textbf{Step 1}

There are two sources of randomness in the algorithm, namely randomness in sampling which arm $x_t$ to play, and sampling which arm $x'_t$ to observe and update weights. Let $\textbf{pl, ob}$ denote the randomness of selecting arms to play and to observe, respectively, over all iterations $T$, and let $\E$ denote the unconditional expectation. 

The key observation here is that the randomness in playing is independent of the randomness in observing, affecting only the loss value suffered and nothing else. Based on this observation, the very first step of the proof is the following equivalence via linearity and tower property of expectation 

\begin{align*}
\mathbb{E}\left[ \sum_{t\in I} \ell_t^{\top} x_t-\sum_{t\in I} \ell_t^{\top} x^*\right]
&=\sum_{t\in I} \mathbb{E}\left[\E\left[ \ell_t^{\top} x_t- \ell_t^{\top} x^*|x_1, x_1', \ldots, x_{t-1}, x_{t-1}'\right]\right]\\
&=\sum_{t\in I} \mathbb{E}\left[\E_{\textbf{ob}}\E_{\textbf{pl}}\left[ \ell_t^{\top} x_t- \ell_t^{\top} x^*|x_1, x_1', \ldots, x_{t-1}, x_{t-1}'\right]\right]\\
&=\sum_{t\in I}\E\left[\mathbb{E}_{\textbf{ob}}\left[  \ell_t^{\top}  \sum_k p(t)_k v(t,k)- \ell_t^{\top} x^*|x_1, x_1', \ldots, x_{t-1}, x_{t-1}'\right]\right]\\
&=\sum_{t\in I}\mathbb{E}\left[ \mathbb{E}_{\textbf{ob}}\left[ \hat{\ell}_t^{\top}  \sum_k p(t)_k v(t,k)- \hat{\ell}_t^{\top} x^* |x_1, x_1', \ldots, x_{t-1}, x_{t-1}'\right]\right]\\
&=\sum_{t\in I}\mathbb{E}\left[ \mathbb{E}_{\textbf{ob}}\left[ \hat{\ell}_t^{\top}  \sum_k p(t)_k v(t,k)- \hat{\ell}_t^{\top} x^* |x_1', \ldots, x_{t-1}'\right]\right]\\
&=\sum_{t\in I}\mathbb{E}_{\textbf{ob}}\left[  \hat{\ell}_t^{\top}  \sum_k p(t)_k v(t,k)- \hat{\ell}_t^{\top} x^*\right]\\
&=\mathbb{E}_{\textbf{ob}}\left[ \sum_{t\in I} \hat{\ell}_t^{\top}  \sum_k p(t)_k v(t,k)-\sum_{t\in I} \hat{\ell}_t^{\top} x^*\right],
\end{align*}
which decouples the randomness of playing $x_t$ from observing $x'_t$. The third equality holds as we take expectation over $x_t$ conditioned on $x_1, x_1', \ldots, x_{t-1}, x_{t-1}'$, and the fourth equality holds since $\hat{\ell}_t$ is an unbiased estimator of $\ell_t$. In the fifth equality, we remove the conditioning over $x_1, \ldots x_{t-1}$, since the random variables inside the conditional expectation are not functions of these variables. Hence, we can take expectation only over the randomness of the observations. As a result of this equivalence, the random $x_t$ term in the regret is replaced by the convex combination of $v(t,k)$, which is exactly the expectation of $x_t$ over the randomness of playing.

\textbf{Step 2}

Assume that the loss value $\ell_t(i)$ of each arm $i$ at any time $t$ is bounded in $[0,1]$, to bound the regret of experts, we need the following lemma on EXP-3 algorithms with general unbiased loss estimators.

\begin{lemma}\label{lemma: exp}
Given $\tilde{\ell}_t$, an unbiased estimator of $\ell_t$, such that for some distribution $z_t$, $\tilde{\ell}_t(i) = \frac{1}{z_t(i)} \ell_t(i)$ and $\tilde{\ell}_t(j) = 0$ for $j\neq i$ with probability $z_t(i)$. Suppose $z_t$ satisfies $w_t(i) \le Cz_t(i)$ for all $i$. The EXP-3 algorithm using $\tilde{\ell}_t$ with $\eta=\sqrt{\frac{\log n}{T n C}}$ has regret bound $2\sqrt{ CnT \log n}$.
\end{lemma}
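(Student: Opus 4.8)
The plan is to run the textbook multiplicative-weights potential argument for Algorithm~\ref{alg:exp3}, tracking the \emph{unnormalized} weights $\tilde{w}_t(i)$ defined by $\tilde{w}_1(i)=1$ and $\tilde{w}_{t+1}(i)=\tilde{w}_t(i)\,e^{-\eta\tilde{\ell}_t(i)}$, so that $w_t(i)=\tilde{w}_t(i)/\Phi_t$ with potential $\Phi_t=\sum_i \tilde{w}_t(i)$ and $\Phi_1=n$. Before touching the potential I would record the three structural facts about the estimator that the argument needs. First, $\tilde{\ell}_t$ is nonnegative, since $\ell_t(i)\in[0,1]$ and $z_t(i)>0$; this legitimizes the inequality $e^{-x}\le 1-x+\tfrac{x^2}{2}$ for $x\ge 0$. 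Second, $\tilde{\ell}_t$ is conditionally unbiased: given the history, sampling $i\sim z_t$ gives $\E[\tilde{\ell}_t(i)]=z_t(i)\cdot \ell_t(i)/z_t(i)=\ell_t(i)$ in every coordinate. Third, and this is the crux, the conditional second moment is controlled by the overlap assumption $w_t(i)\le C z_t(i)$:
$$
\E\Big[\sum_i w_t(i)\tilde{\ell}_t(i)^2\Big]=\sum_i w_t(i)\,z_t(i)\frac{\ell_t(i)^2}{z_t(i)^2}\le \sum_i \frac{w_t(i)}{z_t(i)}\le Cn.
$$

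Next I would carry out the standard per-round estimate. Expanding the potential ratio and applying $e^{-\eta\tilde{\ell}_t(i)}\le 1-\eta\tilde{\ell}_t(i)+\tfrac{\eta^2}{2}\tilde{\ell}_t(i)^2$ gives
$$
\frac{\Phi_{t+1}}{\Phi_t}=\sum_i w_t(i)e^{-\eta\tilde{\ell}_t(i)}\le 1-\eta\, w_t^\top\tilde{\ell}_t+\frac{\eta^2}{2}\sum_i w_t(i)\tilde{\ell}_t(i)^2,
$$
and then $\log(1+x)\le x$ turns the product over $t$ into a telescoping sum. Lower-bounding $\Phi_{T+1}\ge \tilde{w}_{T+1}(i^\star)=\exp\big(-\eta\sum_t\tilde{\ell}_t(i^\star)\big)$ for the best fixed arm $i^\star$ and using $\Phi_1=n$, I obtain after rearranging
$$
\sum_t w_t^\top\tilde{\ell}_t-\sum_t\tilde{\ell}_t(i^\star)\le \frac{\log n}{\eta}+\frac{\eta}{2}\sum_t\sum_i w_t(i)\tilde{\ell}_t(i)^2.
$$

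Finally I would take expectations over the observation randomness. By conditional unbiasedness the left side becomes the true expected regret $\E\big[\sum_t w_t^\top\ell_t-\sum_t\ell_t(i^\star)\big]$ (the play law is $w_t$, so $w_t^\top\ell_t$ is exactly the expected loss suffered), while the variance term is bounded by $TCn$ via the third fact above. This yields the regret bound $\tfrac{\log n}{\eta}+\tfrac{\eta}{2}TCn$; substituting $\eta=\sqrt{\log n/(TnC)}$ makes the two terms $\sqrt{TnC\log n}$ and $\tfrac12\sqrt{TnC\log n}$, whose sum $\tfrac32\sqrt{CnT\log n}$ lies within the claimed $2\sqrt{CnT\log n}$.

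The analysis is almost entirely routine EXP3 bookkeeping, so the only genuinely substantive point—and the step I would be most careful about—is the second-moment computation: decoupling the play distribution $w_t$ from the observation distribution $z_t$ destroys the usual cancellation $w_t(i)/z_t(i)=1$, and the whole result hinges on the hypothesis $w_t(i)\le C z_t(i)$ converting $\sum_i w_t(i)/z_t(i)$ into the factor $Cn$ rather than something unbounded. I would also take care with the conditioning, applying unbiasedness one round at a time against the history-measurable $w_t$, which is what lets the expectation pass cleanly through the telescoped inequality.
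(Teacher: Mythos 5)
Your proof is correct and follows essentially the same route as the paper's: the paper simply cites the standard EXP3 potential analysis from Hazan's book as a black box and then performs exactly your key step, bounding $\E\bigl[\sum_i w_t(i)\tilde{\ell}_t(i)^2\bigr] \le \sum_i w_t(i)/z_t(i) \le Cn$ via the hypothesis $w_t(i)\le Cz_t(i)$, before tuning $\eta$. Your unfolding of the potential argument (and the resulting slightly better constant $\tfrac{3}{2}$ versus $2$) is just a self-contained rendering of the same analysis, not a different approach.
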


\begin{proof}
Following the standard analysis of EXP-3 in \cite{hazan2016introduction}, we have that
\begin{align*}
    \E[\text{regret}]= \E\left[\sum_{t=1}^T \ell_t(i_t)-\sum_{t=1}^T \ell_t(i^*)\right]&\le \E\left[\sum_{t=1}^T \tilde{\ell}_t(w_t)-\sum_{t=1}^T \tilde{\ell}_t(i^*)\right]\\
    &\le \E\left[\eta \sum_{t=1}^T \sum_{i=1}^n  \tilde{\ell}_t(i)^2 w_t(i)+\frac{\log n}{\eta}\right]\\
    &= \eta \sum_{t=1}^T \E\left[\sum_{i=1}^n  \tilde{\ell}_t(i)^2 w_t(i)\right]+\frac{\log n}{\eta}.
\end{align*}
Let $\E_t$ denote the expectation of a random variable conditioned on the randomness before time $t$ (before playing $i_t$). Notice that for each time step,
\begin{align*}
\E\left[\sum_{i=1}^n  \tilde{\ell}_t(i)^2 w_t(i)\right] =\E\left[\E_t\left[\sum_{i=1}^n  \tilde{\ell}_t(i)^2 w_t(i)\right]\arrowvert t-1\right]&=
\E\left[\sum_{i=1}^n \ell_t(i)^2 \frac{w_t(i) }{z_t(i)}\right]\\
&\le \E\left[\sum_{i=1}^n \frac{w_t(i) }{z_t(i)}\right]\le Cn
\end{align*}
Therefore, taking $\eta = \sqrt{\log n/TC n}$, we have
$$
\E[\text{regret}]\le 2\sqrt{TCn\log n}.$$

\end{proof}
 
Back to our algorithm, had we used $v(t,k)$ to sample $x'_t$, then in the above lemma we have $C=1$ for the $kth$ expert, which implies an optimal regret $2\sqrt{nI\log n}$. However, each expert may have a very different $v(t,k)$, thus making one of them optimal can lead to worse regret on the rest of experts.

Instead, in our algorithm we use the distribution $P_t\ge \frac{1}{2B} \sum_k v(t,k)$ to sample $x'_t$, in which $\frac{1}{B} \sum_k v(t,k)$ is just the average of $v(t,k)$. For each expert $k$, this distribution treated as $z_t$ in the above lemma, guarantees $C=2B\le 2\log T$. Therefore, we have that for any expert $k$, its regret on interval $I$ is bounded by
$$
\E[\text{regret of expert $k$}]= \mathbb{E}_{\textbf{ob}}\left[ \sum_{t\in I} \hat{\ell}_t^{\top} v(t,k)-\sum_{t\in I} \hat{\ell}_t^{\top} x^*\right]\le 2\sqrt{2nI\log n \log T}.
$$

\textbf{Step 3}

It's only left to check the regret of the meta algorithm. Our analysis follows Theorem 1 in \cite{daniely2015strongly} with a few changes. Recall the equivalence obtained in step 1, the term we need to bound is the regret of our action $x_t$ with respect to expert $k$ for all experts:

$$
\mathbb{E}_{\textbf{pl, ob}}\left[ \sum_{t\in I} \ell_t^{\top} x_t-\sum_{t\in I} \ell_t^{\top} v(t,k)\right]=\mathbb{E}_{\textbf{ob}}\left[ \sum_{t\in I} \tilde{r}_t(k)\right].
$$
where we recall the definition $\tilde{r}_{t}(k)=\hat{\ell}_t^{\top} \sum_k p(t)_k v(t,k)  -\hat{\ell}_t^{\top}v(t,k)$.

Define the pseudo-weight $\tilde{w}_t(k)$ to be $\tilde{w}_t(k)=\frac{w_t(k)}{\eta_k}$, and $\tilde{W}_t=\sum_k \tilde{w}_t(k)$. We are going to prove $\tilde{W}_t\le t(\log t +1)$. To this end, we fix any possible "trajectory" of $\tilde{W}_t$'s that can be reached by our algorithm, then prove $\tilde{W}_t$ is bounded using induction. Since the only source of randomness for the objective on the right hand side comes from the observations $x_1', \ldots, x_T'$, we take an arbitrary sequence of $x'_t$ such that the weights of the algorithm are totally deterministic.

When $t=1$, only $\mA_1$ is active therefore $\tilde{W}_1=1\le 1(\log 1+1)$. Assume now the claim holds for any $\tau\le t$, we decompose $\tilde{W}_{t+1}$ as
\begin{align*}
    \tilde{W}_{t+1}&=\sum_{k} \tilde{w}_{t+1}(k)\\
    &=\sum_{k, 2^k|t+1} \tilde{w}_{t+1}(k)+\sum_{k, 2^k\nmid t+1} \tilde{w}_{t+1}(k)\\
    &\le \log(t+1)+1+\sum_{k, 2^k\nmid t+1} \tilde{w}_{t+1}(k),
\end{align*}
because there are at most $\log(t+1)+1$ number of different intervals in $S$ starting at time $t+1$, where each such interval has initial weight $\tilde{w}_{t+1}(k)=1$. Now according to the induction hypothesis,
\begin{align*}
    \sum_{k, 2^k\nmid t+1} \tilde{w}_{t+1}(k)&=\sum_k \tilde{w}_t(k)(1+\eta_k \tilde{r}_t(k))\\
    &=\tilde{W}_t+\sum_k\tilde{w}_t(k)\eta_k \tilde{r}_t(k)\\
    &\le t(\log t +1)+\sum_k w_t(k) \tilde{r}_t(k).
\end{align*}

We complete the argument by showing that $\sum_k w_t(k) \tilde{r}_t(k)=0$. By the definition of $\tilde{r}_t(k)$, we have that
\begin{align*}
    \sum_k w_t(k) \tilde{r}_t(k)&=W_t\sum_k p(t)_k\left(\hat{\ell}_t^{\top}\left(\sum_k p(t)_k v(t,k)\right) -\hat{\ell}_t^{\top}v(t,k)\right)\\
    &=W_t\left(\hat{\ell}_t^{\top}\left(\sum_k p(t)_k v(t,k) -\sum_k p(t)_k v(t,k)\right)\right)=0.
\end{align*}
The second inequality holds because $\sum_k p(t)_k = 1$.

We notice that the weights are all non-negative. To see this, notice the non-negativity of $\tilde{w}_t(k)$ is guarded by the non-negativity of $1+\eta_k \tilde{r}_t(k)$. Let's lower bound $\tilde{r}_t(k)$: by the definition of $\hat{\ell}_t$, it's a sparse vector where only the coordinate $x'_t$ can have a positive value $\frac{1}{P(t)_{x'_t}}$. However, $P(t)_{x'_t}$ is lower bounded by $\frac{v(t,k)_{x'_t}}{2B}$ by its definition (line 9), thus $\tilde{r}_t(k)\ge -2B\ge -2\log T$. Since $k\ge 2+\log\log T$, we have that $\eta_k\le \frac{1}{4\log T}$, which ensures the desired non-negativity.

Because weights are all non-negative, we obtain
$$
t(\log t+1) \ge \tilde{W}_{t}\ge \tilde{w}_t(k).
$$
Hence, using the inequality that $\log(1+x)\ge x-x^2$ for $x\ge -\frac{1}{2}$, we have that for the $k$ satisfying $2^k=|I|$,
\begin{align*}
    2\log t&\ge \log(\tilde{w}_t(k))=\sum_{t\in I}\log (1+\eta_k \tilde{r}_t(k))\\
    &\ge \sum_{t\in I}\eta_k \tilde{r}_t(k)-\sum_{t\in I}(\eta_k \tilde{r}_t(k))^2
\end{align*}
Rearranging the above inequality and taking an expectation over the observations, we get the following bound
$$
\mathbb{E}_{\textbf{ob}}\left[\sum_{t\in I}\tilde{r}_t(k)\right] \le \eta_k\mathbb{E}_{\textbf{ob}}\left[ \sum_{t\in I}\tilde{r}_t^2(k)\right]+\frac{2 \log T}{\eta_k}.
$$
Next, we need to estimate $\tilde{r}_t(k)$ with our observation, which we apply with the distribution $P(t)$. Denote $N_t=\sum_{i=1}^n \max_{k} v(t,k)_i^2$, the term $\mathbb{E}_{\textbf{ob}}\left[ \tilde{r}_t^2(k)\right]$ can be bounded by:
\begin{align*}
    \mathbb{E}_{\textbf{ob}}\left[ \tilde{r}_t^2(k)\right]&=\mathbb{E}_{\textbf{ob},<t}\left[\mathbb{E}_{\textbf{ob},t}\left[ \left(\hat{\ell}_t^{\top} \sum_k p(t)_k v(t,k)  -\hat{\ell}_t^{\top}v(t,k)\right)^2 \arrowvert t-1 \right] \right]\\
    &=\mathbb{E}_{\textbf{ob},<t}\left[\sum_{i=1}^n P(t)_i \frac{(\ell_t^{\top}e_i)^2}{P^2(t)_i} \left(\left(\sum_k p(t)_k v(t,k)\right) -v(t,k)\right)_i^2 \right]\\
    &\le \mathbb{E}_{\textbf{ob},<t}\left[\sum_{i=1}^n  \frac{(\ell_t^{\top}e_i)^2}{P(t)_i} \left(\max_k e_i^{\top}v(t,k)\right)^2 \right]\\
    &\le \mathbb{E}_{\textbf{ob},<t}\left[2n N_t \right].
\end{align*}
where we use the fact that $\frac{(\ell_t^\top e_i)^2}{P(t)_i} \leq \frac{2N_t}{\max_k e_i^{\top}v(t,k)^2} $ by the definition of $P(t)$. We still need to upper bound the normalization term $N_t$. In fact, since $v(t, k)$ is a distribution for each $k$, we can bound the Frobenius norm by: 
$$
N_t\le \sum_i \sum_k (e_i^{\top} v(t,k))^2\le \log T
$$
because $\sum_i e_i^{\top} v(t,k)=1$. As a result, we have that
$$
\mathbb{E}_{\textbf{ob}}\left[\sum_{t\in I}\tilde{r}_t(k)\right] \le 2\eta_k nI\log T+\frac{2 \log T}{\eta_k}\le 8\log T \sqrt{nI}.
$$
Putting the two pieces together, we have that the regret of our algorithm on any interval $I\in S$ with length at least $4\log T$ (i.e. $k\ge 2+\log\log T)$, can be bounded by $O(\log T \sqrt{nI \log n})$. For other intervals with length at most $4\log T$, the $O\left(\sqrt{nI\log n } \log^{1.5} T\right)$ regret bound automatically holds (this is the reason we only hedge over $2+\log\log T\le k\le \log T$ instead of $0\le k\le \log T$).

Finally, apply the same argument as in A.2 of  \cite{daniely2015strongly} (also Lemma 7 in \cite{lu2022adaptive}).  This allows us to extend the $O(\log T \sqrt{nI \log n})$ regret bound over $I\in S$ to any interval $I$ at the cost of an additional $\sqrt{\log T}$ term, by observing that any interval can be written as the union of at most $\log T$ number of disjoint intervals in $S$ and using Cauchy-Schwarz; see Theorem 1 in \cite{daniely2015strongly}.

\end{proof}

\section{Proof of Theorem \ref{thm: bco}}
\label{sec:pfthmbco}
\begin{proof}
Similar to the proof of Theorem \ref{thm: bandit}, we first prove a regret bound of Algorithm \ref{alg fkm}. We follow the classic framework of ``gradientless'' gradient descent (see chapter 6.4 in \cite{hazan2016introduction}). Let's focus on some expert $\mA_{k}$ over an interval $[j,s]$ where $2^k | j,s$, the proxy loss $\hat{\ell_t}$ for this expert is the smoothed version of $\ell_t$:
$$
\hat{\ell_t}(x)=\E_{\mathbf{u}\sim \mathbf{B}} \ell_t(x+\delta \mathbf{u})
$$
We can verify that the properties in Algorithm \ref{alg fkm} hold with constants $a=\delta G$ and $b=dG \log T$. As for the domain $\hat{\K}$, we notice that (1) $\forall x\in \hat{\K}, \mathbf{u}$ we have that $x+\delta \mathbf{u}\in \K$, therefore $\mA_{k_t}(t)+\delta_t \mathbf{u}$ in Algorithm \ref{alg bandit} is feasible; (2) for any point $x\in \K$, there exists another point $x'=\Pi_{\hat{\K}}[x]\in \hat{\K}$ such that $\|x-x'\|_2\le D\kappa\delta$ by the property of projection. These properties allow us to bound the regret of $\hat{\ell}_t$ instead:

\begin{align*}
    \sum_{t=j}^s \E[\ell_t(\mA_{k}(t))]- \sum_{t=j}^s \ell_t(x)&\le \sum_{t=j}^s \E[\ell_t(\mA_{k}(t))]-\sum_{t=j}^s \ell_t(x')+\kappa\delta DGT\\
    &\le \sum_{t=j}^s \E[\hat{\ell}_t(\mA_{k}(t))]-\sum_{t=j}^s \hat{\ell}_t(x')+(2+\kappa)\delta DGT.
\end{align*}

As a result, we only need to analyze the regret bound of Algorithm \ref{alg fkm} for the loss function $\hat{\ell}_t$. By the standard analysis of OGD (see Theorem 3.1 in \cite{hazan2016introduction}), we have that 
\begin{align*}
    \sum_{t=j}^s \E[\hat{\ell}_t(\mA_{k}(t))]-\sum_{t=j}^s \hat{\ell}_t(x')&\le \eta \sum_{t=j}^s \E\left[\|\tilde{g}_t\|_2^2\right]+\frac{D^2}{\eta}\\
    &\le \eta I d^2 G^2 \log^2 T+\frac{D^2}{\eta}
\end{align*}
Combining these two inequalities, we have that
$$
\sum_{t=j}^s \E[\ell_t(\mA_{k}(t))]- \sum_{t=j}^s \ell_t(x) \le \eta I d^2 G^2 \log^2 T+\frac{D^2}{\eta}+(2+\kappa)\delta DGT=O \left( dGD\sqrt{I}\log T\right)
$$
when we choose $\eta=\frac{D}{dG \sqrt{I} \log T}$ and $\delta=\frac{1}{\kappa T}$. This proves that each expert has a near-optimal expected regret bound over the intervals it focuses on. 

Notice that it's crucial to look at the function value $\ell_t(\mA_{k_t}(t))$ in Algorithm \ref{alg bco}, otherwise $b$ will have an additional $\frac{1}{\delta}$ dependence which leads to sub-optimal regret bounds. This corresponds to the sharp separation between two-query \cite{agarwal2010optimal} and one-query \cite{flaxman2004online} settings.

The rest of the proof is almost identical to that of Theorem \ref{thm: bandit}. Some details are even easier, since we only uniformly sample from the experts instead of using a complicated distribution as in Algorithm \ref{alg bandit}. The next step is to prove a regret bound of the meta MW algorithm. By a similar argument as step 1, for any interval $I=[j,s]\in S$, the actual regret over the optimal expert $\mA_{k}$ on $I$ can be transferred as
$$
 \E\left[\sum_{t=j}^s \ell_t(x_t)-\sum_{t=j}^s \ell_t(\mA_{k})\right]
    = \E\left[\sum_{t=j}^s \tilde{\ell}_t(x_t)-\sum_{t=j}^s \tilde{\ell}_t(\mA_{k})\right],
$$
which allows us to deal with the pesudo-loss $\tilde{\ell}_t$ instead. We focus on the case that $\sqrt{\frac{\log T}{2^k}} \le \frac{1}{2}$, because in the other case the length $I$ of the sub-interval is $O(\log T)$, and its regret is upper bounded by $IGD =O(GD\sqrt{\log T I})$, and the conclusion follows directly.

Define the pseudo-weight $\tilde{w}_t(k)$ to be $\tilde{w}_t(k)=\frac{w_t(k)}{\eta_k}$, and $\tilde{W}_t=\sum_k \tilde{w}_t(k)$. We are going to prove $\tilde{W}_t\le t(\log t +1)$. When $t=1$, only $\mA_1$ is active therefore $\tilde{W}_1=1\le 1(\log 1+1)$. Assume now the claim holds for any $\tau\le t$, we decompose $\tilde{W}_{t+1}$ as
\begin{align*}
    \tilde{W}_{t+1}&=\sum_{k} \tilde{w}_{t+1}(k)\\
    &=\sum_{k, 2^k|t+1} \tilde{w}_{t+1}(k)+\sum_{k, 2^k\nmid t+1} \tilde{w}_{t+1}(k)\\
    &\le \log(t+1)+1+\sum_{k, 2^k\nmid t+1} \tilde{w}_{t+1}(k),
\end{align*}
because there are at most $\log(t+1)+1$ number of different intervals in $S$ starting at time $t+1$, where each such interval has initial weight $\tilde{w}_{t+1}(k)=1$. Now according to the induction hypothesis,
\begin{align*}
    \sum_{k, 2^k\nmid t+1} \tilde{w}_{t+1}(k)&=\sum_k \tilde{w}_t(k)(1+\eta_k \tilde{r}_t(k))\\
    &=\tilde{W}_t+\sum_k\tilde{w}_t(k)\eta_k \tilde{r}_t(k)\\
    &\le t(\log t +1)+\sum_k w_t(k) \tilde{r}_t(k).
\end{align*}

We complete the argument by showing that $\sum_k w_t(k) \tilde{r}_t(k)\le 0$. By the definition of $\tilde{r}_t(k)$ and convexity of $\tilde{\ell}_t$, we have that
\begin{align*}
    \sum_k w_t(k) \tilde{r}_t(k)&=W_t\sum_k p(t)_k\left(\tilde{\ell}_t\left(\sum_k p(t)_k \mA_k(t)\right) -\tilde{\ell}_t(\mA_k(t))\right)\\
    &\le W_t\sum_k p(t)_k\left(\sum_k p(t)_k \tilde{\ell}_t\left(\mA_k(t)\right) -\tilde{\ell}_t(\mA_k(t))\right)\\
    &=W_t\left(\sum_k p(t)_k \tilde{\ell}_t\left(\mA_k(t)\right)-\sum_k p(t)_k \tilde{\ell}_t\left(\mA_k(t)\right)\right)=0.
\end{align*}

Hence, using the inequality that $\log(1+x)\ge x-x^2$ for $x\ge -\frac{1}{2}$, we have that for the $k$ satisfying $2^k=|I|$,
\begin{align*}
    2\log t&\ge \log(\tilde{w}_t(k))=\sum_{t\in I}\log (1+\eta_k \tilde{r}_t(k))\\
    &\ge \sum_{t\in I}\eta_k \tilde{r}_t(k)-\sum_{t\in I}(\eta_k \tilde{r}_t(k))^2\\
    &\ge \eta_k \left(\sum_{t=j}^s \tilde{r}_t(j,k)-\eta_k I G^2D^2 \log^2 T\right),
\end{align*}
because $|\tilde{r}_t(k)|\le GD \log T$. Rearranging the above inequality and taking an expectation over the observations, we get the following bound
$$
\mathbb{E}\left[\sum_{t\in I}\tilde{r}_t(k)\right] \le \eta_k I G^2D^2 \log^2 T+\frac{2 \log T}{\eta_k}=O(GD\sqrt{I}\log^{1.5}T).
$$
Combining this with the previous regret bound for experts, we have that for any such interval $I\in S$, the overall regret can be bounded by $O\left(dGD\sqrt{I}\log^{1.5} T\right)$. Finally, apply the same argument as in A.2 of  \cite{daniely2015strongly} (also Lemma 7 in \cite{lu2022adaptive}).  This allows us to extend the regret bound over $I\in S$ to any interval $I$ at the cost of an additional $\sqrt{\log T}$ term, by observing that any interval can be written as the union of at most $\log T$ number of disjoint intervals in $S$ and using Cauchy-Schwarz; see Theorem 1 in \cite{daniely2015strongly}.
\end{proof}

\section{An Alternative Proof Strategy for BCO}
As we mentioned in the main-text, it's possible to further improve the number of queries in our algorithm from three to two, by combining it with the linear surrogate loss idea from \cite{zhao2021bandit}. We provide a brief algorithm description and proof here, based on the following observation. Let's denote $x_t$ as the point actually played by the algorithm and $g_t$ the sub-gradient of $\hat{\ell}_t$ at $x_t$, then for any interval $I$ and any expert $\A_k$, we have that
\begin{align*}
    \text{convexity:} &\sum_{t\in I} \hat{\ell}_t(x_t)-\hat{\ell}_t(x^{'*}_I)\le \sum_{t\in I}g_t^{\top}(x_t-x^{'*}_I),\\
    \text{decomposition:} &\sum_{t\in I}g_t^{\top}(x_t-x^{'*}_I)=\sum_{t\in I}g_t^{\top}(x_t-\A_k(t))+\sum_{t\in I}g_t^{\top}(\A_k(t)-x^{'*}_I).
\end{align*}
The above observation allows us to reduce to problem of minimizing the regret $\sum_{t\in I} \hat{\ell}_t(x_t)-\hat{\ell}_t(x^{'*}_I)$, to minimizing the tracking regret $\sum_{t\in I}g_t^{\top}(x_t-\A_k(t))$ and the expert regret $\sum_{t\in I}g_t^{\top}(\A_k(t)-x^{'*}_I)$ of an (adaptive) linear surrogate loss $h_t(x)=g_t^{\top}x$. Luckily, the linear loss is the same across all experts, thus we only need to make one gradient estimation of $g_t$ at each round.

The algorithm is similar to Algorithm \ref{alg bco}, with a few differences:
\begin{enumerate}
    \item in line 9, we don't sample $k_t$ anymore.
    \item in line 10, we make only one additional observation $\ell_t(x_t+\delta_t \mathbf{u})$ instead.
    \item in line 11, the gradient estimator is now constructed as $\hat{g}_t=\frac{d\mathbf{u}(\ell_t(x_t+\delta_t \mathbf{u})-\ell_t(x_t))}{\delta_t}$.
    \item in line 12, $\tilde{\ell}_t(\A_k(t))=\hat{g}_t^{\top} \A_k(t)$, $\tilde{\ell}_t(x_t)=\hat{g}_t^{\top} x_t$.
\end{enumerate}

Now we analyze the regret of the modified algorithm. The overall proof strategy is similar to that of Theorem \ref{thm: bco}. First we consider the regret of the pseudo loss $\hat{h}_t(x)=\hat{g}_t^{\top} x$. By the optimality of expert algorithms, we have that for some $k$ which corresponds to an expert $\A_k$ which optimizes $I$,
$$
\sum_{t\in I}\hat{g}_t^{\top}(\A_k(t)-x^*_I)=O(dGD\sqrt{I}),
$$
because $\|\hat{g}_t\|_2\le dG$ by definition. In addition, we have that
$$
\sum_{t\in I}\hat{g}_t^{\top}(x_t-\A_k(t))=\tilde{O}(GD\sqrt{I}),
$$
by the same argument on the tracking regret as in the proof of Theorem \ref{thm: bco}. Use the fact that $\hat{g}_t$ is an unbiased estimation of $g_t$, we reach the following upper bound on expected regret:
$$
\mathbb{E}\left[\sum_{t\in I} \hat{\ell}_t(x_t)-\hat{\ell}_t(x^{'*}_I)\right]\le \mathbb{E}\left[\sum_{t\in I}g_t^{\top}(x_t-x^{'*}_I)\right]=\mathbb{E}\left[\sum_{t\in I}\hat{g}_t^{\top}(x_t-x^{'*}_I)\right]=\tilde{O}(dGD\sqrt{I}).
$$

It's left to bound the estimation error between $\hat{\ell}_t$ and the true loss $\ell_t$. As already shown in the proof of Theorem \ref{thm: bco}, similarly we have that
$$
\sum_{t\in I} \E[\ell_t(x_t)]- \sum_{t\in I} \ell_t(x^*_I)\le \sum_{t\in I} \E[\hat{\ell}_t(x_t)]-\sum_{t\in I} \hat{\ell}_t(x^{'*}_I)+(2+\kappa)\delta DGT.
$$
Combining the two upper bounds, the expected regret of this algorithm is bounded by $\tilde{O}(dGD\sqrt{I}+\kappa\delta DGT)=\tilde{O}(dGD\sqrt{I})$, with only two queries per round. To sum up, this algorithm combines the (1) strongly adaptive regret framework from our algorithm \ref{alg bco}, (2) the linear surrogate loss idea from \cite{zhao2021bandit}. (1) improves the sub-optimal $\tilde{O}(\sqrt{T})$ adaptive regret of \cite{zhao2021bandit}, while (2) reduces the required number of queries by one.

\section{Additional Experiments}

We provide additional experiments in the learning from expert advice setting. The experimental setup is the same, but with more arms and time steps. In the additional experiments, we take $N=300$, $T=65536$, and we run all algorithms 10 times. We plot the moving average with a window of 500.  

Our findings are largely consistent with the previous experiments. The left subfigure of Figure \ref{random_rewards_more} corresponds to the performance of EXP3, StABL, and StABL Naive on four intervals of equal length, where in each interval we have a different best arm. It is clear that StABL attains the best rewards among the three algorithms. EXP3 has good rewards in the first interval, but is not able to adapt when the environment changes. In the right subfigure, we plot the performance of algorithms when the best arm changes at time 7853, 13822, 25180, and 56621. In this experiment, EXP3 underperforms both StABL and StABL Single Scale, while StABL Single Scale also has difficulty adapting to the different interval lengths, especially at the beginning where the environment changes quickly.

\begin{figure}
\centering
\begin{subfigure}{0.5\textwidth}
  \centering
  \includegraphics[width=.95\linewidth]{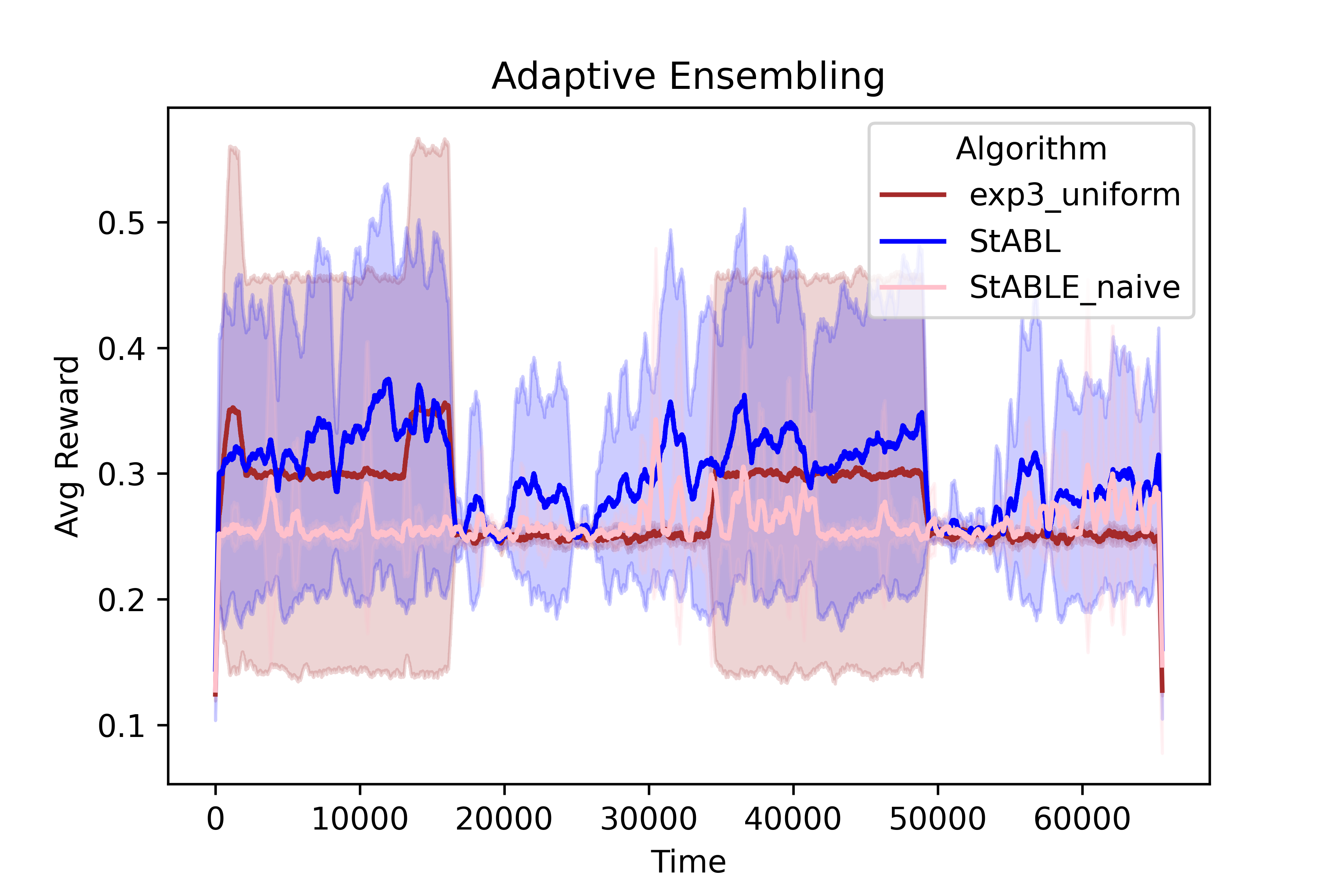}
  \label{fig:sub3}
\end{subfigure}%
\begin{subfigure}{0.5\textwidth}
  \centering
  \includegraphics[width=.95\linewidth]{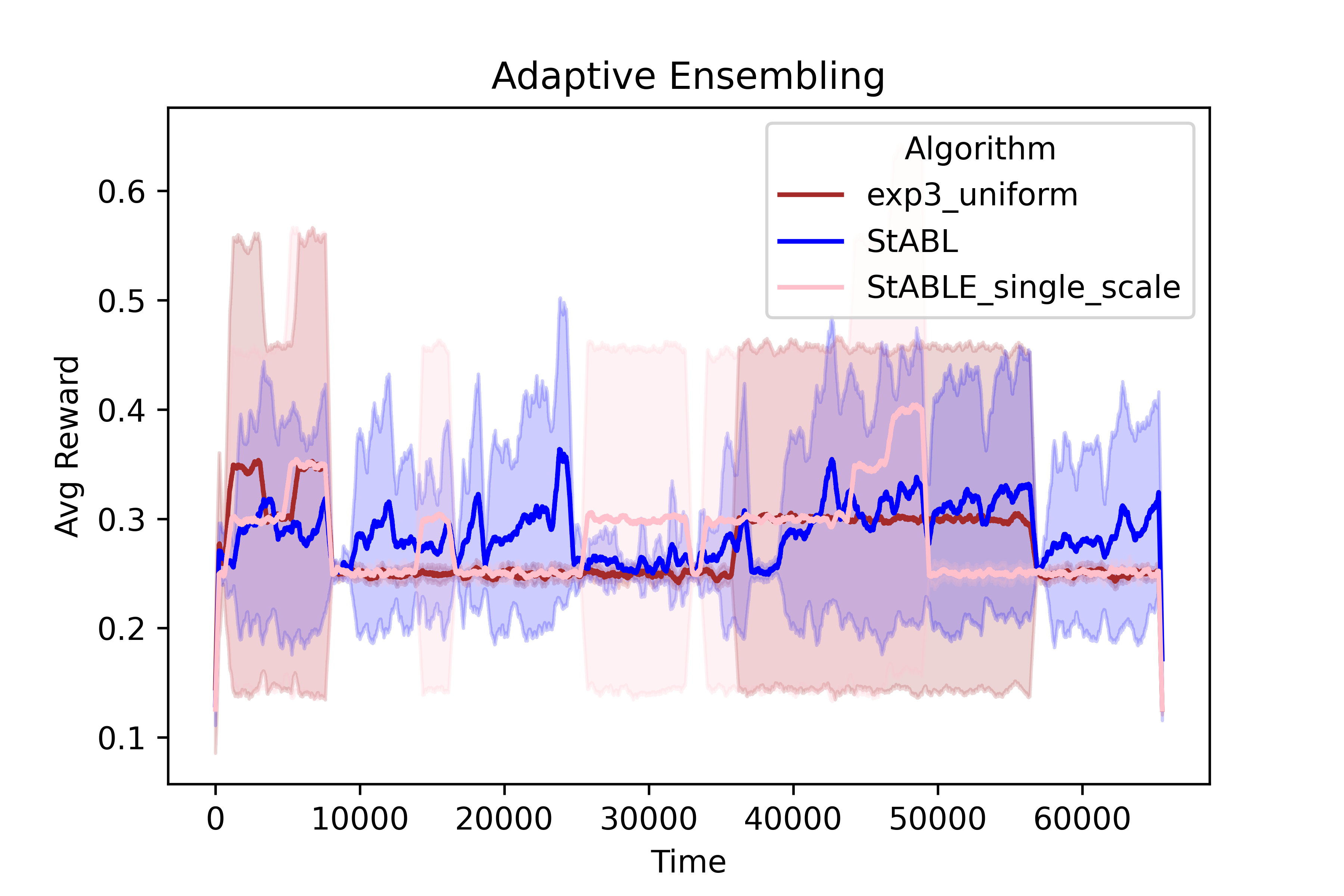}
  \label{fig:sub4}
\end{subfigure}
\caption{Further comparison plots of the algorithm rewards in the learning with expert advice setting.}
\label{random_rewards_more}
\end{figure}


\end{document}